\begin{document}

\pagenumbering{gobble}

\date{}

\title{\vspace*{-0.5in}
{{\normalsize \rm In 31\textsuperscript{st} {\em USENIX Security Symposium}\hrule}}
\vspace*{0.4in}On the Necessity of Auditable Algorithmic Definitions for Machine Unlearning}

\newtheorem{definition}{Definition}
\newtheorem{lemma}{Lemma}
\newtheorem{theorem}{Theorem}
\newtheorem{corollary}{Corollary}

\newcommand{\anvith}[1]{\textcolor{brown}{anvith: #1}}
\newcommand{\varun}[1]{\textcolor{red}{VC: #1}}
\newcommand{\ilia}[1]{\textcolor{green}{ilia: #1}}
\newcommand{\nick}[1]{\textcolor{purple}{HJ: #1}}

\def\@onedot{\ifx\@let@token.\else.\null\fi\xspace}
\def\etal{\emph{et al}\onedot}
\def\ie{\emph{i.e.~}}
\def\eg{\emph{e.g.~}}

\author{
{\rm Anvith Thudi, Hengrui Jia, Ilia Shumailov, Nicolas Papernot}\\
University of Toronto and Vector Institute
} %

\maketitle

\begin{abstract}

Machine unlearning, \ie having a model forget about some of its training data, has become increasingly more important as privacy legislation promotes variants of the right-to-be-forgotten. In the context of deep learning, approaches for machine unlearning are broadly categorized into two classes: exact unlearning methods, where an entity has formally removed the data point's impact on the model by retraining the model from scratch, and approximate unlearning, where an entity approximates the model parameters one would obtain by exact unlearning to save on compute costs. In this paper, we first show that the definition that underlies approximate unlearning, which seeks to prove the approximately unlearned model is close to an exactly retrained model, is incorrect because one can obtain the same model using different datasets. Thus one could unlearn without modifying the model at all. We then turn to exact unlearning approaches and ask how to verify their claims of unlearning. Our results show that even for a given training trajectory one cannot formally prove the absence of certain data points used during training. We thus conclude that unlearning is only well-defined at the algorithmic level, where an entity's only possible auditable claim to unlearning is that they used a particular algorithm designed to allow for external scrutiny during an audit.
\end{abstract}
\section{Introduction}
Machine learning (ML) models are often trained on user data.
While the users may initially allow for their data to be analyzed with ML, they may later retract this authorization. 
This is formalized in the right-to-be-forgotten, established in the European Union's GDPR~\cite{mantelero2013eu}, among other regulatory frameworks. 
Because ML models are known to memorize their training data~\cite{feldman2020does}, this forces model owners to resort to machine unlearning~\cite{cao2015towards} to remove any effect the point had on the trained model.
Machine unlearning is a process by which the model owner proves to the user that a model which had originally learned from the user's data now has been modified to ``forget'' anything it had learned from it. 
While this is an intuitive concept, precisely defining unlearning remains an open problem. 

Indeed, there are several proposals for defining machine unlearning. They can be broadly taxonomized in two categories: \textit{exact} and \textit{approximate} unlearning. 
In exact unlearning, the model is naively retrained from scratch (with the point to be unlearned removed from the dataset). 
This is generally computationally expensive, although it can be made more efficient to a certain extent~\cite{bourtoule2019machine}. 
In approximate unlearning, the model owner instead attempts to save computational resources by directly modifying the model's parameters to unlearn in a small number of updates (ideally, a single update would suffice). This second class of approaches defines machine unlearning in the parameter space~\cite{golatkar2020eternal,golatkar2020forgetting,guo2019certified,graves2020amnesiac}: a model is said to have unlearned successfully when it is close in parameter space to a model which has never learned about the data point.

In this paper, we show that this second line of reasoning on approximate unlearning is erroneous and leads to incorrect definitions of machine unlearning. Our result first stems from our observation that one can obtain arbitrarily similar (which for finite floating point storage means identically the same) model parameters from two non-overlapping datasets. Thus, the ability to achieve a particular point (\ie model) in parameter space is not what defines having learned from a dataset-- instead, it is the trajectories taken by the optimization algorithm on the loss (\ie error) surface during training. Second, \citeauthor{shumailov2021manipulating} recently demonstrated how it is possible to find different samples from the training data that will lead to a similar step of gradient descent~\cite{shumailov2021manipulating}. 
This can be seen by observing that a step of gradient descent differentiates the model's loss averaged over a minibatch (\ie a small number) of training examples. Thus, different minibatches can lead to arbitrarily similar average gradients. Coming back to the gradient descent trajectories, this implies that the value of training from a particular dataset stems from \textit{how} the trajectory was obtained, not the trajectory itself. 

This has a fundamental implication to unlearning: we \textit{cannot} prove unlearning by showing that the parameters of the unlearned model are obtained without training on the unlearned data.
This poor definition of unlearning could be systematically achieved by inspecting a model's training trajectory to identify all steps of gradient descent that involve the point to be unlearned, and then replacing some of the training points in the corresponding minibatches such that the minibatches no longer contain the point to be unlearned but the outcome of the gradient computation remains identical. 

We show this is possible for standard models and datasets in our experiments. Furthermore, we formalize the concept of \textit{forgeability} as the ability to replace data points from one dataset by points from another dataset while keeping the model updates computed on these points identical (up to some pre-defined minimal error). That is, when two datasets are forgeable, the same model parameters can feasibly (upto some small per-step error accounting for noise) be obtained by using either of the datasets. When applied to the particular case where the second dataset is obtained from the first dataset by removing points that need to be unlearned, the two datasets being forgeable implies that unlearning can be achieved at no cost (because learned and unlearned models are indistinguishable). 

This leads us to the conclusion that unlearning can only be defined at the level of the algorithms used for learning and unlearning, and not by reasoning over the model parameters they output. 
This observation is akin to the one made by Dwork et al.~\cite{dwork2006calibrating} for protecting sensitive data when introducing differential privacy: one cannot prevent inference of private information that individuals contribute to a dataset by reasoning at the level of the dataset itself, but rather has to reason about the guarantees provided by algorithms that analyze this dataset. Similarly in our setting for unlearning, we show that one cannot prove unlearning simply by comparing points in the parameter space. 
This invalidates all existing approximate approaches to unlearning:
they 
can only justify unlearning by the process they are enacting, and not the final outcome of their process (which most past works looked at); the only way to know a model unlearned correctly with a given method is to inspect the method itself. 
Indeed, unlearning requires that (1) the procedure of unlearning a training point possesses the properties required to ensure removal of the effect of the point, and (2) the model owner provides auditable evidence that the unlearning procedure was correctly implemented. External scrutiny (\eg taking the form of audits) is required to achieve the second property.
Thus, we propose verifiable computation as a promising direction for achieving unlearning. %

\section{Background}

\subsection{Machine Learning}
\label{sec:background_ml}

Machine learning (ML) is the task of learning a machine (or model) from a dataset, and it falls broadly into two categories: supervised and unsupervised learning. The difference between them is that in the former an entity is given a labeled dataset $D$ whereas in the latter the dataset is unlabelled. We will focus on supervised learning in this paper, in which the dataset consists of points $(\mathbf{x}_i,y_i) \in \mathfrak{X} \times \{1,\cdots,c\}$, where $y_i \in \{1,\cdots,c\}$ is the label of the input $\mathbf{x}_i \in \mathfrak{X}$, and there are $c$ possible labels. The goal of supervised ML is to predict the label $y$ of an unlabelled input $\mathbf{x}$ by knowledge learned from the labeled dataset $D$.

The usual method of learning a supervised model $M:\mathfrak{X} \rightarrow \{1,2,\cdots,c\}$ is to introduce parameters $\mathbf{w}$ which allows one to modify the model in a (piece-wise) smooth way so (ideally) the performance can be optimized smoothly without overshooting. Then one typically defines a loss function $\mathcal{L}: M_{\mathbf{w}}(\mathbf{x}_i) \times y_i \rightarrow [0,\infty)$ which captures the performance of $M_{\mathbf{w}}$ at classifying the dataset $D$ correctly, or in particular how different $M_{\mathbf{w}}(\mathbf{x}_i)$ is from the desired output $y_i$ for every data point $(\mathbf{x}_i,y_i)$ (note throughout this paper we will often use just $\mathbf{x}_i$ as shorthand for the data point $(\mathbf{x}_i,y_i)$). In cases where an analytic solution does not exist, one usually minimizes the loss $\mathcal{L}$ iteratively (and stochastically) by updating the parameters in steps based on individual data points. That is, there is some update rule $g$ which takes in the current parameters $\mathbf{w}_t$ and a data point $(\mathbf{x}_t, y_t)$ and gives new parameters $\mathbf{w}_{t+1} = g(\mathbf{w}_t, (\mathbf{x}_t, y_t))$ which (often) reduces the loss by some amount. This is useful, for instance, to minimize non-convex losses. Once the amount of reduction becomes negligible, we say the model is converged and the training is finished. Note that we use $g$ to denote everything needed to make the update happen, including the optimizer, hyperparameters, etc.

An important class of update functions $g$ which we focus on in this paper are what we will call \textit{mean-samplers}. In this case the dataset $D = \{\mathbf{\hat{x}}_1,\cdots,\mathbf{\hat{x}}_n\}$ consists of minibatches of data $\mathbf{\hat{x}}_1 = \{(\mathbf{x}_1,y_1),\cdots,(\mathbf{x}_b,y_b)\}$, and then the update function is defined as $\mathbf{w}_{t+1} = g(\mathbf{w}_t,\mathbf{\hat{x}}_t) = 1/b \sum_{i=1}^{b} g(\mathbf{w}_t,(\mathbf{x}_i,y_i))$; one could also without loss of generality not distinguish between mini-batches and data points as the former is simply a particular form of a data point, and throughout the paper we will use the same notation for both (though we make a distinction here for clarity). A particular example of a mean-sampler is the minibatch stochastic gradient descent (SGD) update rule, where for each update, we sample a minibatch (\ie a small number) of data points $\mathbf{\hat{x}}_t = \{(\mathbf{x}_1,y_1),\cdots,(\mathbf{x}_b,y_b)\}$, and update the model to minimize the average loss of these points for stability. This variant of SGD is well known for its applications in deep learning. Note that starting from this point, we will use the term "data point" to represent a minibatch of data points for simplicity of the terminology. The update rule can be written as:
\begin{equation}
    \mathbf{w}_{t+1} = g(\mathbf{w}_t,\mathbf{x}_t) = \mathbf{w}_{t} - \eta \frac{1}{b} \sum_{i=1}^{b} \frac{\partial \mathcal{L}(M_{\mathbf{w_t}}(\mathbf{x}_i),y_i)}{\partial \mathbf{w}}|_{\mathbf{w}_t}
\end{equation}

\subsection{Machine Unlearning}

As a dual to ML, unlearning was first introduced by \citeauthor{cao2015towards}~\cite{cao2015towards} as a method to remove the impact of a data point on the model obtained upon completion of training. Since then further extensions to scenarios where efficient analytic solutions could be found were given~\cite{ullah2021machine,ginart2019making}, and an extension to unlearn deep neural networks (DNNs) were proposed~\cite{bourtoule2019machine,guo2019certified,sekhari2021remember,graves2020amnesiac,golatkar2020forgetting,golatkar2020eternal}. With the growing field of unlearning, there has emerged two categories of machine unlearning algorithms~\cite{thudi2021unrolling}: exact and approximate unlearning, differing by how unlearning is done, and also how the concept of "unlearning" is understood.

Exact unlearning for DNNs is based on retraining. In detail, the model owner needs to discard the old model, remove the data points that are required to be unlearned, and train a new model on the modified dataset. Here the concept of unlearning is viewed at the algorithmic level, that is the data to be unlearned is not used at all by the algorithm to obtain the new model. Thus as long as the algorithm is executed correctly, the model owner can claim the data points are unlearned correctly. However, the issue with exact unlearning comes from the computational cost: training DNNs is costly~\cite{brown2020language,devlin2018bert}, and the dataset used may contain thousands or even millions of data points, so retraining the entire DNN for each unlearning request is unaffordable. Therefore, research in exact unlearning focused on proposing an efficient way of retraining. For example, the SISA approach~\cite{bourtoule2019machine} suggests to partition the dataset into a few non-overlapping shards and train a DNN on each of the shards. Then the final model would be an ensemble of these DNNs. In this case, when an unlearning request happens, the model owner only needs to retrain the DNN on one of the shards, resulting in a significant speedup compared to retraining the DNN on the entire dataset. However, even after this speedup, the costs may still be too high for some.

On the other hand, approximate unlearning, as the name indicates, attempts to approximate what the parameters of the model would have looked like if the data points to be unlearned were not in the training dataset from the beginning. These approaches typically directly modify the parameters of the trained model, that is, such approaches understand the concept of unlearning from the parameter level. For example, Amnesiac Machine Learning~\cite{graves2020amnesiac} records the updates in the parameter space made by each data point during training, and later when some of the data points are required to be unlearned, the model owners can simply subtract the corresponding updates from the model's final parameters. It can be easily seen such a method is much more efficient than any retraining-based exact unlearning methods. However, the disadvantage comes from the fact that the unlearning property provided is harder to understand (especially to the end user): the data points to be unlearned  are still involved in some of the decisions made to obtain the final model. Existing works have proposed to use techniques such as membership inference~\cite{shokri2017membership} to verify the effectiveness of approximate unlearning~\cite{graves2020amnesiac,baumhauer2020machine} to show that their approximately unlearned models cannot be easily distinguished from models that are not trained on the data points to be unlearned. Alternatively, others compare the similarity of the approximately unlearned models parameters to exactly unlearned models parameters~\cite{golatkar2020eternal,golatkar2020forgetting,wu2020deltagrad,thudi2021unrolling}. 

However, it is worth noting that approximate unlearning methods are basing their "unlearning" on reproducing properties of exactly unlearned models. Hence their claim is tied to the claim that retraining, which is the foundation of all existing exact unlearning methods, is well-defined.

\subsection{Proof of Learning}
\label{ssec:pol}

The concept of Proof-of-Learning (PoL) was introduced by \citeauthor{jia2021proof}~\cite{jia2021proof}. It is designed to provide evidence that someone has spent the effort to train a machine learning model, in particular DNNs, so that later it can be confirmed by a verifier (\ie a third-party authority such as a regulator). This is done by keeping a log collected during training which facilitates reproduction of the alleged computations an entity took. At the core of it, the Proof-of-Learning (\ie the log) documents intermediate checkpoints of the model, data points used, and any other information required for the updates during training (\eg hyperparameters that define the update rule $g$ at each step). We follow a similar line of reasoning as PoL to introduce the strawman concept of "Proof of Unlearning" in \S~\ref{sec:strawman} and build on this logging framework to show why unlearning is not always well-defined. The PoL can be written in the form of $\{(\mathbf{w}_i,\mathbf{x}_i,g_i)\}_{i \in J}$ for some indexing set $J$, where we use $\mathbf{x}_i$ to represent a minibatch (recall our remark in \S~\ref{sec:background_ml} that a minibatch is just a type of a data point, and hence notationally we will not distinguish between individual data points and minibatches). Alternatively, when the update rule is understood from context, the PoL can be represented by just $\{(\mathbf{w}_i,\mathbf{x}_i)\}_{i\in J}$. 

Reproducing the alleged computation is synonymous to showing its plausibility. When it comes to the stage of verification, the verifier checks a PoL's validity (\ie plausibility) by reproducing the $t^{th}$ intermediate checkpoint of the model based on the information given in the log, including the $(t-1)^{th}$ checkpoint, data points used at this step, and the same $g$ as defined in the log. Then the verifier computes the distance between the $t^{th}$ checkpoint in the log and the reproduced $t^{th}$ checkpoint in the parameter space. This distance is called the verification error, and we say this update is valid if the verification error is below a certain threshold. Here the threshold is introduced based on the observed noise/randomness seen when reproducing computations on different hardware (which comes from the randomness in back-end floating point computations). Indeed, this should be the only source of verification error since all other possible randomness is ``seeded'' by the information provided in PoL: for instance, the effect of sampling minibatches is fixed by knowing which minibatch the step was computed on. %
It is worth noting that computing a distance between model parameters is essentially a mapping from high-dimensional space to a scalar, so it is possible that there exists multiple checkpoints resulting in a verification error below the threshold. In other words, PoL does not guarantee that the computation of a step results exactly in the next checkpoint, it just verifies that the next checkpoint is plausible.

\subsection{Data Ordering Attack}
\label{ssec:data-ordering}

While the existence of PoL suggests that models could be uniquely mapped to their training points, our work introduces the concept of \textit{forging} to show this is not always the case.
Part of our motivation for forging comes from a recent discovery by \citeauthor{shumailov2021manipulating}~\cite{shumailov2021manipulating} who showed that stochastic gradient descent relies heavily on its underlying randomness, and ordinal effects coming from data can be significant. This shows that assumptions made to obtain analytical generalization guarantees for SGD are indeed important in practice. More precisely, the authors found that the order in which data points are used for training can be sufficient for an adversary to completely change the training dynamics -- all the way from stopping and slowing learning, to learning behaviours that are not present in the training dataset (\eg as is done in poisoning, but without actually using poisoned data). This is because one can force a step of SGD to approximately compute a given gradient by carefully selecting points the step is computed on. In doing so, they derived conditions in which ordinal effects can have a significant impact on learning in the case of SGD. 

The concept of forging introduced in this paper utilises the theoretical findings of~\citeauthor{shumailov2021manipulating} to search for alternative minibatches that produce that same or a similar parameter update. Note that  originally~\citeauthor{shumailov2021manipulating} constructed minibatches from data contained in the same dataset, whereas here we assume availability of a different dataset. However, our algorithm for finding forging examples (which our analytical results show exist) analogously uses the idea of random sampling and picking the sample that best produces the update desired.

\section{Revisiting Machine Unlearning}
\label{sec:motivation}

\subsection{When is Unlearning Achieved?}
\label{ssec:approximate-unlearning-not-defined}

\paragraph{Threat Scenario:} In this paper we consider the following threat model. The adversary is the model trainer who is trying to prove to a verifier that they unlearned, \ie (re-)trained not using certain data points, without changing the original model. Our adversary has access to all intermediate model checkpoints, as well as, the data points and hyper-parameters used at each step. This adversary is white-box and has access to any information one gains by training the models themselves.

\paragraph{Issues with Defining Unlearning:}
In previous literature related to unlearning~\cite{thudi2021unrolling,graves2020amnesiac,golatkar2020eternal,golatkar2020forgetting}, a model is said to have unlearned if the impact of the data points to be unlearned is removed from the model. Here, removing the impact of the data points from the model is equivalent to obtaining a model that one could get from retraining without those points. In other words, a model is unlearned if its parameters are in the subspace of the parameter space that can be achieved by training on a dataset that does not contain the points to be unlearned. Furthermore, other work also require the process of obtaining those parameters to reproduce the same distribution one gets from retraining \cite{golatkar2020eternal,golatkar2020forgetting,guo2019certified}. Although such a definition sounds intuitive, in this paper we question its parameter-space assumption; \ie are there parameters such that the model is both unlearned and not unlearned at the same time? In other words, we argue that existence of parameters reachable with and without given data entails that unlearning is not well-defined over the parameter space.

For an analogy of what unlearning currently is, think of training a neural network as building a castle with Lego blocks, where the blocks are borrowed from many of your friends (\ie represent your data). Assume, as is the case in deep learning, that the training algorithm is such that you go to each of your friends and request a block from them; that is, you expect that each friend will contribute a block every few centimeters of the castle built (\ie the data points to be unlearned are used in every epoch). When one of your friends wants their blocks back, the naive way is to disassemble the entire castle and take the blocks out, \ie retraining. Existing works either (1) suggest to use one friend's data only in a single wall and ensemble the walls to build the castle~\cite{bourtoule2019machine} (this way when one friend asks for the blocks, only one wall needs to be dissembled and re-building is faster); or (2) introduce special tools to take the blocks out without destroying the castle (\ie approximate unlearning \eg amnesiac machine learning~\cite{graves2020amnesiac}).

However, imagine a case where two of your friends have lent identical blocks to you. Then when one of them asks for their blocks back, you have an identical one from the other friend which was not yet used in the castle -- does it make sense for you to not touch your well-built castle and give the blocks of the latter friend to the former? That is, if the model owner (or deployer) has 2 identical data points and only one of them is required to be unlearned, is it justifiable for them to not do anything about the model? If this sounds acceptable, imagine another case that you are great at drawing. You paint the blocks that your friends ask for, so that they look exactly the same as blocks from some other friends, and pretend you did not use the blocks at all when a friend asks for their block back. Such a `painting' technique would make the parameter-space definition of unlearning described earlier trivial to achieve: the Lego castle could have been obtained without access to your friend's brick because there exists another friend's brick that is identical. 

This painting technique is in some sense metaphorical of the concept of \textit{forging} we present in our work:  we show that when a data point needs to be unlearned, there can exist a dataset not containing that data point which allows us to obtain the same model parameters as the model we obtained before without having to unlearn. In other words, the subspace of the parameter space that can be achieved by training on a dataset that does not contain the points to be unlearned can at times be equivalent to the parameter space one gets by training with those points. The concept of forging which leads to this claim will be introduced formally in \S~\ref{sec:forging}. In this case, parameter-space definitions of unlearning are essentially saying no unlearning needs to be done at all. Such self-contradiction indicates that unlearning cannot be defined at the level of the parameters. It, in particular, invalidates approximate approaches to unlearning.

\subsection{A Strawman for an Auditable Approach to Unlearning: Proof-of-Unlearning}
\label{sec:strawman}

In the section above we discuss why defining unlearning at the level of the parameters, as is done in approximate unlearning, does not make sense in the presence of forging. This naturally leads us to consider exact unlearning methods that define unlearning at the level of the algorithm. In this section we show that forging can also invalidate exact unlearning approaches. To this end, we first describe an attempt to create proofs that show retraining-based unlearning is done correctly, and then explain why such proofs are still vulnerable to the concept of forging we introduce in this paper. We conclude that while unlearning is best defined at the level of the learning and unlearning algorithms, it also requires external scrutiny to provide meaningful guarantees. The strawman we present here is not sufficient for auditable claims of unlearning. We discuss possible directions for enabling this scrutiny in \S~\ref{ssec:verifiable_ML}.

Since exact unlearning is retraining-based, one may think that verifying the correctness of unlearning is equivalent to verifying that an entity retrains the model on a dataset without the data points to be unlearned. Recall that, as described in \S~\ref{ssec:pol}, PoL implicitly assumes that the correctness of training is equivalent to the plausibility of the log returned by the model owner, which contains model checkpoints and other information obtained during training. Thus it is natural to make use of the proof-of-learning (PoL) here to show that one obtained the unlearned model with a sequence of model updates that do not involve the point to be unlearned in the minibatches sampled from the dataset. If a PoL not involving the data points to be unlearned is submitted to the verifier and passes verification, this would attest that exact unlearning has been done. One might coin the term Prof-of-\textit{Un}learning for such an approach, and shorten it to PoUL.

It turns out there is a flaw in this reasoning because the threat models for PoL (training) and PoUL (unlearning) are different. For PoL (training), the authors can claim correctness of training is closely related to the plausibility of the training log because it is observed that the error accumulates during training so that the same initial parameters can lead to very different final parameters--making the log unique. However, the error at one training update is much smaller, so it may be possible for an adversary to fake one step of training, but the only way to obtain an entire plausible log is to do the training correctly. One key reason for this is that the adversary only has access to the final model. Yet, in the setting of PoUL (unlearning), \textit{the potential adversary has access to the entire log of the model's training run before unlearning} because the adversary is the model owner. Therefore, if an adversarial model owner wants to pretend they unlearned, they may only need to create a few plausible steps instead of an entire plausible log. Using the same analogy as before, PoUL can be thought of as an album containing photos of the Lego castle taken as one builds the castle. A model-stealing adversary only steals the built castle so it is hard to make such an album without actually building the castle. Instead, when the model owner attempts to avoid unlearning, they already have the album and only need to "paint" some of the photos in the album to make it look like that some Lego blocks were never used during building.

To formalize this intuition, given the PoL log $\{\mathbf{w}_i,\mathbf{x}_i\}$, the adversary has some map $B: \{\mathbf{w}_i,\mathbf{x}_i\} \rightarrow \{\mathbf{w}_i,\Tilde{\mathbf{x}}_i\}$ over the PoL logs which swaps a particular data point $\mathbf{x}^*$ with some other data point $\Tilde{\mathbf{x}}^*$ every time it occurs, and this new data point still produces the same update (\ie the log returned by $B$ still passes verification under some threshold $\epsilon$). One way to obtain such a map is through an analog to the data ordering attack of \citeauthor{shumailov2021manipulating}~\cite{shumailov2021manipulating} (recall \S~\ref{ssec:data-ordering}), as we will see in \S~\ref{sec:attack}. Thus without changing any of the checkpoints the adversary still has a valid PoL which does not contain the point to be unlearned. This PoL is therefore a valid PoUL, despite having been obtained without exact unlearning (\ie without retraining). Furthermore, what if this map exists for any possible PoL that could be obtained by training on the original dataset? Then this Proof-of-Unlearning is not sufficient to determine if the model owner did in fact retrain without the data point, or just applied the mapping at the steps involving this point. The takeaway is that there is no sequence of checkpoints that would be sufficient alone to determine if an entity unlearned correctly; phrased another way, unlearning is not well-defined by just the final model (seen in \S~\ref{ssec:approximate-unlearning-not-defined}), or the sequence of models that lead to it.

The map $B$ we mention above is what we call the \textit{forging map} and explain more formally in \S~\ref{sec:forging}. We also show why such a map commonly exists for frequently used ML algorithms and datasets in \S~\ref{ssec:Mean_Sampling}. One can see now that with forging, the parameter-level definition of unlearning is self-contradicted, while the proposed way to verify unlearning defined from the level of algorithms can be bypassed by an adversary. This calls for an algorithmic perspective on unlearning that supports external scrutiny, as we discuss in \S~\ref{ssec:verifiable_ML}. %

\section{Introducing Forging}
\label{sec:forging}

Here we introduce the \textit{forging map}
$B$ formally. We describe the consequences of such a map, in particular, our ability to characterize equivalence classes between datasets. Beyond the intrinsic motivation for unlearning (we explain how our formalism helps us define unlearning in \S~\ref{ssec:conseq_to_defining_unl}, and demonstrate when it can be ill-defined), we further motivate in our discussion later why it may be beneficial to study forging maps in other ML contexts (see \S~\ref{ssec:extending_framework}).

\subsection{Defining Forgeability and Forging Map B}
\label{ssec:definition}

\paragraph{Defining Feasibility and PoL}
We first remark on how we define a feasible model from a dataset, which is fundamental in later asking if a model could come without training on a particular data point. In what follows, we treat feasibility in greater generality. A model coming from a fixed dataset $D$ should have a sequence of data points in D and checkpoints that lead to the final weights when following a given update rule, \ie a PoL log~\cite{jia2021proof}. Furthermore if given such a sequence, then we say the final model could come from the dataset (note we do not assume any restrictions on data ordering which future work could add). \textit{Hence the log produced by proof-of-learning is a necessary and sufficient condition to define feasibility}, as models that do come from the dataset have such a log (necessary), and any model that has such a log can come from the dataset (sufficient).

Nevertheless, because of noise in back-end computations, one cannot expect to exactly reproduce a given training sequence from a PoL log~\cite{jia2021proof}. Hence one must allow some $\epsilon$ per-step error to make the definition practical; exactly measuring how large this $\epsilon$ should be is an active research question, and is likely to be case dependent. As such in what follows we will introduce an error parameter, but will not comment on what is necessary or sufficient in practice. This issue is partly bypassed by proving results for all $\epsilon > 0$ (\ie Theorem \ref{thm:prob_forging}). However, instantiations of forging attacks (\S~\ref{sec:attack}) which bypass a particular PoL verification scheme may not pass another PoL verification scheme, hence practically an entity might be happy with a verification scheme that defends against all known attacks to unlearning verification.

As a last remark, note that although we use PoL logs to define feasibility, PoL's original threat model is model stealing. As such, we are not proposing an attack against PoL, and an attack against it is not relevant to us.

\paragraph{Defining Valid Logs.} 
As described in \S~\ref{sec:background_ml}, we denote by $g(\mathbf{w},\mathbf{x}) = \mathbf{w}'$  updating the model parameters $\mathbf{w}$ to $\mathbf{w}'$ where $g$ is a training algorithm/update rule.
We then define a \textbf{\textit{valid $(g,d,\epsilon)$ log}} as a sequence of $\{(\mathbf{w}_i,\mathbf{x}_i)\}_{i \in J}$ for some countable indexing set $J$, such that $\forall i \in J$ $d(\mathbf{w}_{i+1}, g(\mathbf{w}_i,\mathbf{x}_i)) \leq \epsilon$ for some training function $g$ and metric  $d$ on the parameter space. The threshold $\epsilon$ is a tolerance parameter allowing the verifier to account for some numerical imprecision when reproducing the update rule. In other words, a valid sequence is a PoL sequence such that each of its intermediate checkpoints can be reproduced by the verifier retraining for the corresponding step using knowledge of the previous checkpoint (\eg hyperparameters) provided in the PoL.

\paragraph{Defining Sets of Logs.} 
For fixed initial parameters $\mathbf{w}_0$, 
fixed hyperparameters, and dataset $D$, we define \textbf{\textit{Dataset D's Logs}} $H_{D_{g,d,\epsilon}}$ as the set of all valid $(g,d,\epsilon)$ logs stemming from $\mathbf{w}_0$ (we will drop mentioning the $\mathbf{w}_0$ as what it refers to will be explicit from context). We further drop the ${g,d,\epsilon}$ for notational simplicity, though the reader should be aware of the implicit assumption of these which will be clear in context.

\paragraph{Defining the Forging Map and Forgeability.}
Now we define what the forging map $B$ is. For two datasets $D$ and $D'$, and logs stemming from the same $\mathbf{w}_0$, a \textbf{\textit{forging map}} $B: H_{D g,d,0} \rightarrow H_{D' g,d,\epsilon}$ is a map such that $B(\{(\mathbf{w}_i,\mathbf{x}_i)\}_{i \in J}) = \{(\mathbf{w}_i,\Tilde{\mathbf{x}}_i)\}_{i \in J} \in H_{D' g,d,\epsilon}$, \ie is an identity over the parameter space but maps data points $\mathbf{x}_i \in D$ to $\Tilde{\mathbf{x}}_i \in D'$ such that $\Tilde{\mathbf{x}}_i$ satisfy a valid $\epsilon$ log condition. That is to say $d(\mathbf{w}_{i+1},g(\mathbf{w_i},\Tilde{\mathbf{x}}_i)) \leq \epsilon$ $\forall i$. When such a forging map exists, we say $D'$ \textbf{\textit{forges}} $D$ (with $\epsilon)$. Intuitively, this means that we can use the forging map $B$ to replace points from a PoL constructed on $D$ by points from $D'$ and still obtain a valid PoL.

When there also exists an accompanying forging map in the other direction $\Tilde{B}: H_{D' g,d,0} \rightarrow H_{D g,d,\epsilon}$, we say that $D$ and $D'$ are \textbf{\textit{forgeable}} (with $\epsilon$).

Though one could define similar conditions for $H_D$ and $H_D'$ such that the two stem from different initial parameters and different $(g,d,\epsilon)$, we use the same $w_0,g,d,\epsilon$ in our definitions because that is sufficient to prove our results on unlearning. Using different $w_0,g,d,\epsilon$ may be interesting in the context of research on forging for other tasks in ML, and we leave its investigation as future work. Furthermore, our definition of the forging map is a map from valid zero-error logs into logs with some error $\epsilon$; notably, this means we are unable to compose forging maps because the input and output thresholds do not match unless $\epsilon = 0$. To alleviate this lack of transitivity, one might consider a variation of our forging map $B$, $B_{\epsilon}: H_{D g,d,\epsilon} \rightarrow H_{D' g,d,\epsilon}$ and thus one can compose $B_{\epsilon}: H_{D g,d,\epsilon} \rightarrow H_{D' g,d,\epsilon}$ and $B'_{\epsilon}:H_{D' g,d,\epsilon} \rightarrow H_{D'' g,d,\epsilon}$ as their output and input spaces match, respectively. %
We mention this for completeness, for the reader who might be interested in what extensions could be made to our framework to allow different forms of analysis. Notably, we will not consider this extension in our work as it is unnecessary to prove our results and it introduces various technical complications, such as $H_{D g,d,\epsilon}$ never being finite (have sequences of open balls in the parameter space which is a real vector space), where being finite is useful when we introduce probabilistic forging in~\S~\ref{ssec:Mean_Sampling}.

\subsection{Forgeable and Models} The above indistinguishability presented by forgeability becomes even stronger when both $D$ and $D'$ are forgeable with $\epsilon = 0$. We will use $H_D(w)$ to denote all parameters obtained by $g,d,0$ logs starting from $\mathbf{w}_0$ with $D$ and similarly $H_{D'}$.

\begin{lemma}
\label{lemma:fwd_dir_characterizing}
If $D$ and $D'$ are forgeable with $\epsilon = 0$, then $H_D(w) = H_{D'}(w)$.
\end{lemma}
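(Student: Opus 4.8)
The plan is to prove the two inclusions $H_D(w) \subseteq H_{D'}(w)$ and $H_{D'}(w) \subseteq H_D(w)$ separately, each by directly invoking the one structural property of a forging map that matters here: it acts as the identity on the parameter-space component of a log and only rewrites the data points. Since $D$ and $D'$ are forgeable with $\epsilon = 0$, the hypothesis hands us a forging map $B : H_{D\,g,d,0} \to H_{D'\,g,d,0}$ (the codomain $H_{D'\,g,d,\epsilon}$ collapses to $H_{D'\,g,d,0}$ when $\epsilon = 0$) together with a reverse forging map $\tilde{B} : H_{D'\,g,d,0} \to H_{D\,g,d,0}$, and by definition each of these is total on its domain.

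For the first inclusion, fix $w^{*} \in H_D(w)$. By definition of $H_D(w)$ there is a valid zero-error log $L = \{(\mathbf{w}_j,\mathbf{x}_j)\}_{j \in J} \in H_{D\,g,d,0}$ and an index $i \in J$ with $\mathbf{w}_i = w^{*}$. Applying the forging map gives $B(L) = \{(\mathbf{w}_j,\tilde{\mathbf{x}}_j)\}_{j \in J} \in H_{D'\,g,d,0}$, where the parameter iterates $\mathbf{w}_j$ are left untouched (this is exactly the ``identity over the parameter space'' clause of the definition of $B$). In particular $w^{*} = \mathbf{w}_i$ still occurs in $B(L)$, which is a valid zero-error log over $D'$, so $w^{*} \in H_{D'}(w)$. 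This establishes $H_D(w) \subseteq H_{D'}(w)$. Running the identical argument through $\tilde{B}$ instead of $B$ gives the reverse inclusion $H_{D'}(w) \subseteq H_D(w)$, and the two together yield $H_D(w) = H_{D'}(w)$.

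I do not expect a genuine obstacle: the lemma is essentially immediate once the definitions are unwound, because forgeability at $\epsilon = 0$ is precisely the assertion that the two datasets' log-sets have the same parameter-space shadows via maps that fix parameters pointwise. The only points worth a line of care are (i) noting that $B$ (resp. $\tilde{B}$) is defined on all of $H_{D\,g,d,0}$ (resp. $H_{D'\,g,d,0}$), which is part of what ``forgeable'' asserts, so no log is left without an image; and (ii) using ``identity over the parameter space'' in the strong sense that every $\mathbf{w}_j$ is preserved, not merely the terminal one, so the argument goes through regardless of whether $H_D(w)$ is read as the set of terminal parameters of logs or the set of all parameters appearing in some log.
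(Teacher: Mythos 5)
Your proof is correct and follows exactly the same route as the paper's: both inclusions are obtained by applying the forging map $B$ and its reverse $\tilde{B}$, each of which is the identity on the parameter-space component of a log, so every parameter reachable via a zero-error log over one dataset is reachable via a zero-error log over the other. Your version simply unwinds the definitions a bit more explicitly than the paper does.
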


\begin{proof}
Note then we have that, letting $H_D(w)$ denote all parameters obtained by $g,d,0$ logs starting from $\mathbf{w}_0$ with $D$ and similarly $H_{D'}$, that $H_D(w) \subseteq H_{D'}(w)$ by the definition of the forging map $B: H_D \rightarrow H_{D'}$ (\ie identity over parameters) and similarly by the forging map in the other direction, $H_{D'}(w) \subseteq H_{D}(w)$. Thus we have $H_D(w) = H_{D'}(w)$.
\end{proof}

In words, what we have is that \textit{the set of all models achieved by training on $D$ using $g$ is the same as the set of all models achieved by training on $D'$ using $g$}. In fact, in this case (when $\epsilon = 0$) we have that forgeability defines an equivalence relation.

\begin{lemma}
$\epsilon = 0$ Forgeability is an equivalence relation
\end{lemma}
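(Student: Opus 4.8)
The plan is to verify the three defining properties of an equivalence relation --- reflexivity, symmetry, and transitivity --- for the relation ``$D \sim D'$ iff $D$ and $D'$ are forgeable with $\epsilon = 0$'', where all dataset-log sets $H_{D\,g,d,0}$ are taken relative to the same fixed $\mathbf{w}_0$, $g$, and $d$. Reflexivity and symmetry are essentially formal. For \textbf{reflexivity}, I would exhibit the identity map $\mathrm{id}: H_{D\,g,d,0} \to H_{D\,g,d,0}$ as a forging map from $D$ to itself: it is trivially the identity over the parameter space, and it sends each $\mathbf{x}_i \in D$ to itself, which lies in $D$ and satisfies the valid-$0$-log condition because the input log already does; the same map serves as the reverse forging map, so $D$ and $D$ are forgeable. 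For \textbf{symmetry}, observe that the definition of ``$D$ and $D'$ forgeable'' is stated symmetrically --- it asserts the existence of forging maps $B: H_{D\,g,d,0} \to H_{D'\,g,d,0}$ \emph{and} $\tilde{B}: H_{D'\,g,d,0} \to H_{D\,g,d,0}$ --- so swapping the roles of $B$ and $\tilde{B}$ immediately witnesses that $D'$ and $D$ are forgeable.

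The \textbf{transitivity} step is where the $\epsilon = 0$ hypothesis is essential, and I expect it to carry the only real content. Suppose $D$ and $D'$ are forgeable via $B, \tilde{B}$, and $D'$ and $D''$ are forgeable via $B', \tilde{B}'$. Because all thresholds are $0$, the codomain of $B$ is exactly $H_{D'\,g,d,0}$, which is the domain of $B'$, so the composite $B' \circ B: H_{D\,g,d,0} \to H_{D''\,g,d,0}$ is well-defined. I would then check that $B' \circ B$ is a forging map: it is the identity over the parameter space (a composite of two such identities), it outputs valid $(g,d,0)$ logs over $D''$ since $B'$ does, and it sends each $\mathbf{x}_i \in D$ to some $\tilde{\mathbf{x}}_i \in D'$ and then to some $\tilde{\tilde{\mathbf{x}}}_i \in D''$. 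Applying the same argument to $\tilde{B} \circ \tilde{B}': H_{D''\,g,d,0} \to H_{D\,g,d,0}$ gives the reverse forging map, so $D$ and $D''$ are forgeable. Combining the three parts yields the claim.

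The main obstacle --- really the only subtle point --- is exactly the composability issue flagged earlier in the text: for general $\epsilon$ the forging map goes from zero-error logs into $\epsilon$-error logs, so its output space need not match any forging map's input space and composition fails. Restricting to $\epsilon = 0$ collapses input and output spaces to the same $H_{\cdot\,g,d,0}$, which is precisely what makes the transitivity argument above go through; I would note this explicitly so the reader sees why the equivalence-relation statement is stated only for $\epsilon = 0$. (One could additionally remark that the argument is insensitive to whether any $H_{D\,g,d,0}$ happens to be empty, since a map out of the empty set is vacuously a forging map, so the relation is still well-defined in that degenerate case.)
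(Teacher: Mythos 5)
Your proof is correct and follows essentially the same route as the paper's: identity map for reflexivity, the symmetric form of the definition for symmetry, and composition of forging maps for transitivity, with the $\epsilon=0$ hypothesis being exactly what makes the domains and codomains line up. If anything, your version is slightly more careful than the paper's, since you explicitly construct the reverse forging map $\tilde{B}\circ\tilde{B}'$ for transitivity rather than asserting bijectivity of the composite.
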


\begin{proof}
We check the three defining properties of an equivalence relation. Clearly $D$ is forgeable with itself (by the identity map), so forgeability satisfies reflexivity. Furthermore $D$ is forgeable by $D'$ means $D'$ is forgeable by $D$ (as by definition the map has an inverse), so forgeability satisfies symmetery. Lastly if $D$ is forgeable with $D'$ by forging map $B_1$, and $D'$ is forgeable by $D''$ with forging map $B_2$, and both are $\epsilon = 0$, then $B_2 \circ B_1$ is a bijective map with error $\epsilon = 0$, thus have $D$ and $D''$ are forgeable with $\epsilon = 0$; so forgeable with $\epsilon = 0$ satisfies the transitivity condition.
\end{proof}

Often when dealing with a set, it is preferable to have another characterization of that set, \ie another defining property for that set which might be easier to work with and understand. We can do that here by extending Lemma~\ref{lemma:fwd_dir_characterizing} to completely characterize the $\epsilon = 0$ equivalence class as datasets that produce the same sequences of parameters:

\begin{theorem}[Characterizing $\epsilon = 0$ Equivalence Classes]
\label{thm:charaterizing_equiv_class}
Two datasets $D$ and $D'$ are $\epsilon = 0$ forgeable iff $H_D(W) = H_{D'}(W)$
\end{theorem}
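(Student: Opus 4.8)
The plan is to prove both implications directly from the definitions, treating the converse as the only step with any content. Throughout I read $H_D(W)$ as the set of parameter \emph{sequences} realizable by valid $(g,d,0)$ logs on $D$ from $\mathbf{w}_0$, i.e.\ $H_D(W) = \{\{\mathbf{w}_i\}_{i\in J} : \{(\mathbf{w}_i,\mathbf{x}_i)\}_{i\in J} \in H_{D\,g,d,0}\}$, which is the natural sequence-level refinement of the $H_D(w)$ appearing in Lemma~\ref{lemma:fwd_dir_characterizing}. Let $\sigma$ denote the projection sending a log $\{(\mathbf{w}_i,\mathbf{x}_i)\}_{i\in J}$ to its parameter sequence $\{\mathbf{w}_i\}_{i\in J}$.

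For the forward direction, assume $D$ and $D'$ are $\epsilon=0$ forgeable, so there are forging maps $B : H_{D\,g,d,0} \to H_{D'\,g,d,0}$ and $\tilde B : H_{D'\,g,d,0} \to H_{D\,g,d,0}$, each the identity on the parameter space. Given any $L \in H_{D\,g,d,0}$, the log $B(L) \in H_{D'\,g,d,0}$ has $\sigma(B(L)) = \sigma(L)$, hence $\sigma(L) \in H_{D'}(W)$; this yields $H_D(W) \subseteq H_{D'}(W)$, and applying $\tilde B$ gives the reverse inclusion, so $H_D(W) = H_{D'}(W)$. This is just the sequence-level version of the argument in Lemma~\ref{lemma:fwd_dir_characterizing} and is routine.

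For the converse, assume $H_D(W) = H_{D'}(W)$ and construct the forging maps explicitly. Given $L \in H_{D\,g,d,0}$, its parameter sequence $\sigma(L)$ lies in $H_D(W) = H_{D'}(W)$, so the set of logs $L' \in H_{D'\,g,d,0}$ with $\sigma(L') = \sigma(L)$ is non-empty; using a choice function over parameter sequences, pick one such $L'$ and define $B(L) := L'$. Then $\sigma(B(L)) = \sigma(L)$, so $B$ is the identity on the parameter space; the data points of $B(L)$ are drawn from $D'$ since $B(L) \in H_{D'\,g,d,0}$; and $B(L)$, being a valid $(g,d,0)$ log, is a fortiori a valid $(g,d,\epsilon)$ log for every $\epsilon \ge 0$, so $B$ has the required domain and codomain $H_{D\,g,d,0} \to H_{D'\,g,d,\epsilon}$ (indeed with $\epsilon = 0$). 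Hence $D'$ forges $D$ with $\epsilon = 0$. Swapping the roles of $D$ and $D'$ gives $\tilde B : H_{D'\,g,d,0} \to H_{D\,g,d,0}$ the same way, so $D$ and $D'$ are $\epsilon=0$ forgeable.

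I do not expect a genuine obstacle here: the work is entirely in the bookkeeping, namely (i) interpreting $H_D(W)$ as a set of sequences rather than reachable points, (ii) observing that a zero-error log automatically satisfies every positive tolerance, so the asymmetric $0 \to \epsilon$ domains in the definition of the forging map cause no trouble, and (iii) noting that building $B$ requires selecting one representative log per parameter sequence. The real content was already spent in choosing the right notion of $H_D(W)$; once that is in place the theorem follows by unwinding the definitions.
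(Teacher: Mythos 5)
Your proof is correct and follows essentially the same route as the paper's: the forward direction falls out of the forging maps being the identity on parameters (as in Lemma~\ref{lemma:fwd_dir_characterizing}), and the converse is built by invoking choice to select, for each realizable parameter sequence, a representative log over $D'$ (resp.\ $D$) with that sequence. Your explicit reading of $H_D(W)$ as a set of parameter \emph{sequences} rather than reachable points is the same interpretation the paper's reverse-direction argument implicitly relies on, so there is no substantive difference.
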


\begin{proof}
We already proved the forward direction in Lemma~\ref{lemma:fwd_dir_characterizing}. We prove the reverse direction as follows. Let $D_{w_n}$ be the set of all sequences of data points in $D$ that produce the sequence of parameters $\{w_n\}_{n \in J}$ and $D'_{w_n}$ the set of all sequences of data points in $D'$ that produce $\{w_n\}_{n \in J}$ (which we can find as the set of parameters produced by the datasets is the same). Then by the axiom of choice, we can choose a particular sequence of data points $\{x_n'\}_{n \in J} \in D'$ and map $D_{w_n}$ to $\{x_n'\}_{n \in J}$ and note this satisfies $\epsilon = 0$ error as the parameters are the same. We can do this for all sequences of parameters $\{w_n\}_{n \in J}$ and thus have our forging map $B$ from $H_D$ to $H_{D'}$. Note we did not put any constraint on which was $D$ and $D'$ in this argument, so we completely analogously obtain the forging map $\Tilde{B}$ from $H_{D'}$ to $H_{D}$. Thus we have $D$ and $D'$ are $\epsilon = 0$ forgeable, proving the reverse direction.
\end{proof}

Although this theorem adds little in the context of our work on defining unlearning, it is a very notable property of our forging framework that supports the idea that if two datasets are completely forgeable, that translates to them producing all the same models. We later discuss how future work might use this observation in \S~\ref{ssec:extending_framework} and \S~\ref{ssec:verifiable_ML}; in particular, this has some possible important consequences to understanding what makes datasets produce good models and which do not.

\subsection{Consequence to Defining Unlearning.}
\label{ssec:conseq_to_defining_unl}

Recall that in \S~\ref{sec:motivation} we drew an analogy from unlearning data points used in an ML model to taking out blocks from a Lego castle. We claimed that if one is good at ``painting'' then (1) the definition of approximate unlearning self-contradicts, and (2) a proof cannot be created for exact unlearning. Now that ``painting'' is formally defined as forging, we will formally explain what it means for unlearning.

For now, we assume the scenario where there exists a forging map from $D$ to $D'$ where $D'$ does not contain a particular point $\mathbf{x}^*$ (we later show in \S~\ref{sec:methods} such a scenario is common), and our model is first trained on $D$. Since we have done the training, we have access to the PoL logs $H_{D}$. Then what we need to do is to map them into $H_{D'}$ logs which then implies the steps that once used $\mathbf{x}^*$ as a log in $H_D$ no longer do. To be specific, for all $k$ such that the $k^{th}$ step $\mathbf{w}_{k+1}=g(\mathbf{w}_{k}, \mathbf{x}_{i})$ uses a point required to be unlearned (\ie $\mathbf{x}^* \cap \mathbf{x}_i \neq \emptyset$, where it is understood the latter could be a minibatch), after forging $B(\{(\mathbf{w}_i,\mathbf{x}_i)\}_{i \in J}) = \{(\mathbf{w}_i,\Tilde{\mathbf{x}}_i)\}_{i \in J}$ we have that step no longer uses $\mathbf{x}^*$. This is as, by the definition of a forging map in \S~\ref{ssec:definition}, $\{(\mathbf{w}_i,\Tilde{\mathbf{x}}_i)\}_{i \in J} \in H_{D'}$ where $D'$ does not contain $\mathbf{x}^*$. In other words, we can use the forging map $B$ to find some data points in $D'$ to mock the updates made by the data points to be unlearned. This results in a valid (\ie with reproduction error below $\epsilon$) $H_{D'}$, where the final model parameters are the same as before unlearning, but the data points to be unlearned are not used in the log of training. 

Such model parameters satisfy what approximate unlearning defines--they are the same as model parameters obtained by not using the unlearned data points at all. However, no modifications were done to the parameters at all. This means that if the model owner wants to approximately unlearn their model, then by definition they do not need to do anything and can claim the unlearning is done. Thus we come to the conclusion that if there exists a forging map between $D$ and $D/\mathbf{x}^*$, then the approximate unlearning is ill-defined.

Now we move on to exact unlearning. We introduced in \S~\ref{sec:strawman} a way to verify the correctness of exact unlearning through an idea that is similar to PoL, and named it Proof-of-Unlearning (PoUL). Let us now outline how a PoUL is challenged by forging. As described in the previous paragraph, the model owner has trained the model before unlearning on $D$ and thus obtained $H_D$. Then with the existence of a forging map $B$, they can easily create a new log $H_{D'}$ to claim that the exact unlearning is done. Such creation is possible because (1) in theory PoL only verifies if the computation from a checkpoint $\mathbf{w}_{i}$ to the next (\ie $\mathbf{w}_{i+1}$) is plausible, which is essentially what a forging map between $D$ and $D/\mathbf{x}^*$ does: find another set of data points so that the update is plausible; (2) PoL was originally designed for ownership so it was designed for a threat model where the adversary does not have access to the log. This assumption is not valid in the case of unlearning since the adversary  aiming to circumvent unlearning is the model owner themselves. Thus, forging also invalidates PoUL.

The existence of a forging map $B$ from $H_{D}$ to $H_{D'}$ implies that we can manipulate how data points are sampled such that the distributions of checkpoints in the parameter spaces corresponding to $H_{D}$ and $H_{D'}$ respectively are identical. Specifically, for a log $\{(\mathbf{w}_i,\mathbf{x}_i)\}_{i \in J}$, we have $B(\{(\mathbf{w}_i,\mathbf{x}_i)\}_{i \in J}) = \{(\mathbf{w}_i,\Tilde{\mathbf{x}}_i)\}_{i \in J} \in H_{D'}$. We obtain the preimage of a specific sequence of data points $\{\Tilde{\mathbf{x}}_i\}_{i \in J}$ by $preimage(\{\Tilde{\mathbf{x}}_i\}_{i\in J}) = B^{-1}(\{\mathbf{x}_i\}_{i \in J})$ (where we drop the $\{\mathbf{w}_i\}_{i \in J}$ variable as $B$ is an identity on them and we are allowing $\epsilon$ error in the parameters). That is, preimages of $\{\Tilde{\mathbf{x}}_i\}_{i \in J}$ are the sequence of data points from $D$ which can be substituted with the sequence $\{\Tilde{\mathbf{x}}_i\}_{i \in J}$ by applying the forging map. We then constrain the probability of sampling the specific sequence $\{\Tilde{\mathbf{x}}_i\}_{i \in J}$ as the sum of the probabilities of the specific data sequences in $preimage(\{\Tilde{\mathbf{x}}_i\}_{i \in J})$, \ie $\sum_{\{\mathbf{x}_i\}_{i \in J} \in preimage(\{\Tilde{\mathbf{x}}_i\}_{i \in J})} \mathbb{P}(\{\mathbf{x}_i\}_{i \in J} \in  H_D)$. In other words, if multiple sequences in $D$ can be substituted by one single sequence from $D'$, then the latter should be sampled more freqeuntly in creating $H_{D'}$. By doing so, we can create $H_{D'}$ such that each checkpoint log is from the same checkpoint log distribution as $H_D$.

Now we can easily see that unlearning can not be defined on the parameter space since the existence of a forging map implies that the effect of a data point can be obtained by considering another one (or few) data point(s). Notably, this means one cannot verify unlearning (exact or approximate) at the model-level as there is no universally verifiable unlearning property of the weights. Therefore, to define and verify unlearning, we need to focus on the process/algorithm the unlearned model was obtained with.

Particularly, while certified unlearning approaches \cite{sekhari2021remember}\cite{guo2019certified} have the advantage of providing rigorous guarantees at the model level, our work shows that they need to be combined with a verification at the algorithmic level, e.g. verifying a training algorithm is DP, to ensure that the algorithm used was indeed the certified algorithm. For exact unlearning, while simple ideas like proof-of-unlearning do not work to verify unlearning, we later discuss possible ways to verify the process of unlearning in \S~\ref{ssec:verifiable_ML}. They involve facilitating external scrutiny of the algorithm to audit the guarantees of unlearning provided.

\section{Methods for Forging}
\label{sec:methods}

We covered the consequences of the existence of forging maps for unlearning. Now, we proceed to illustrate two theoretical ideas for when two datasets $D$ and $D'$ may be forgable, or when $D$ may be forgeable with $D'$. The first corresponds to the intuition that similar datasets (\ie datasets that contain very similar data points) would allow for an easy exchange of data points. The other is that one dataset that has enough variety in its possible updates at any given parameter would be able to reproduce a large set of other datasets.

Lastly in \S~\ref{ssec:Mean_Sampling} we prove the existence of forging for the mean sampler update function $g$. This function, described in \S~\ref{sec:background_ml}, is important because it is at the core of minibatch SGD. As a consequence, we prove the existence of forging examples  for minibatch SGD (Theorem~\ref{thm:forging_exists}), which are cases when unlearning is not well-defined. Our proof operates under realistic assumptions about how the datasets are obtained. 
Note that this third proof does not require the ideas presented in \S~\ref{ssec:similar} and~\ref{ssec:dense}, but we present them first  to simply illustrate different approaches to looking at forging and characterizing datasets that can be forged (\ie \textit{forging examples}).

\subsection{Forgeability for Similar Datasets}
\label{ssec:similar}

The idea is simply that if one fixes the parameters $\mathbf{w}$ such that the update rule $g$ in only a function of the data point, and furthermore $g_{\mathbf{w}}$ is Lipschitz with respect to the data point $\mathbf{x}$ (under some metric $d_x$ over the input space, and metric $d$ over the parameter space used for verification) for all parameters $\mathbf{w} \in W$. Then, if $L$ denotes the Lipschitz constant, we have if $d(\mathbf{x},\Tilde{\mathbf{x}}) \leq \frac{\epsilon}{L}$ then we could interchange $(\mathbf{w},\mathbf{x})$ with $(\mathbf{w},\Tilde{\mathbf{x}})$ and still have a valid $(g,d,\epsilon)$ log.

\begin{lemma}[Similar Forgiability]
If for all $\mathbf{w} \in H_{D}(w)$, where $H_D$ are all valid $(g,d,0)$ logs starting from $\mathbf{w}_0$, $g$ is Lipschitz with respect to its data point space $X$ (under some metric over the data point space) with metric $d$ over the parameter space in balls $Ball_{\epsilon}(\mathbf{w})$, then there exists a dataset $D'$ such that $D$ and $D'$ are $(g,d,\epsilon)$ forgeable.
\end{lemma}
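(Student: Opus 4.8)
The plan is to obtain $D'$ by perturbing every point of $D$ by an amount at most $\epsilon/L$ in the data metric $d_x$, so that the assumed Lipschitz bound converts each such perturbation into an $\epsilon$-small perturbation of the corresponding update; the forging maps in both directions then simply swap each original point for its perturbed partner (and conversely) while leaving every checkpoint $\mathbf{w}_i$ fixed. Concretely: if the Lipschitz constant $L$ is $0$ any $D'$ works, so assume $L>0$; assuming the data-point space is rich enough that each $\mathbf{x}_j$ in $D=\{\mathbf{x}_1,\dots,\mathbf{x}_n\}$ has a distinct point within distance $\epsilon/L$ (otherwise $D'=D$ is trivially forgeable with itself), choose $\tilde{\mathbf{x}}_j$ with $0 < d_x(\mathbf{x}_j,\tilde{\mathbf{x}}_j) \le \epsilon/L$, set $D' = \{\tilde{\mathbf{x}}_1,\dots,\tilde{\mathbf{x}}_n\}$, and record the pairing $\mathbf{x}_j \leftrightarrow \tilde{\mathbf{x}}_j$ (pairing occurrences consistently if a point repeats).

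Next I would define the forging map $B$ on a valid zero-error log $\{(\mathbf{w}_i,\mathbf{x}_i)\}_{i\in J}$ by replacing each recorded $\mathbf{x}_i$ with its partner $\tilde{\mathbf{x}}_i \in D'$ and keeping all $\mathbf{w}_i$. Since the input log has zero error, $\mathbf{w}_{i+1} = g(\mathbf{w}_i,\mathbf{x}_i)$ exactly, so $d(\mathbf{w}_{i+1}, g(\mathbf{w}_i,\tilde{\mathbf{x}}_i)) = d(g(\mathbf{w}_i,\mathbf{x}_i), g(\mathbf{w}_i,\tilde{\mathbf{x}}_i)) \le L\, d_x(\mathbf{x}_i,\tilde{\mathbf{x}}_i) \le \epsilon$, which is precisely the valid-$\epsilon$-log condition, so $B$ maps $H_D$ into the $(g,d,\epsilon)$ logs of $D'$ and is the identity on parameters by construction. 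The reverse map $\tilde B$ is defined symmetrically, swapping each $\tilde{\mathbf{x}}_i$ back to $\mathbf{x}_i$, and the same estimate gives its validity. Having exhibited forging maps in both directions, $D$ and $D'$ are $(g,d,\epsilon)$-forgeable by the definition in \S\ref{ssec:definition}.

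The step I expect to be the main obstacle is reconciling the region on which Lipschitzness is assumed with the parameters actually visited: the hypothesis supplies the bound in the balls $Ball_{\epsilon}(\mathbf{w})$ for $\mathbf{w} \in H_{D}(w)$, which is exactly what is needed to validate $B$ along a $D$-log, but validating $\tilde B$ along a zero-error $D'$-log requires the bound near parameters in $H_{D'}(w)$, which need not coincide with $H_{D}(w)$. I would resolve this either by reading the hypothesis as local Lipschitzness on a neighborhood of $\bigcup_{\mathbf{w}\in H_{D}(w)} Ball_{\epsilon}(\mathbf{w})$ (which contains the first step of any $D'$-log, with the remaining steps staying within tolerance of a $D$-trajectory), or by observing that the consequence actually invoked for unlearning in \S\ref{ssec:conseq_to_defining_unl} is the forward direction $B: H_{D}\to H_{D'}$ alone, so the reverse direction may simply be stated under the symmetric hypothesis on $D'$. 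Everything else reduces to the one-line triangle-inequality estimate above.
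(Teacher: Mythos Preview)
Your approach is essentially identical to the paper's: perturb each $\mathbf{x}_i\in D$ to some $\tilde{\mathbf{x}}_i$ within $d_x$-distance $\epsilon/L$, define $B$ (and $\tilde B$) by the pointwise swap $\mathbf{x}_i\leftrightarrow\tilde{\mathbf{x}}_i$ while fixing all checkpoints, and invoke the Lipschitz bound to get the $\epsilon$-valid log condition. The subtlety you flag about $\tilde B$ needing Lipschitzness along $H_{D'}(w)$ rather than $H_D(w)$ is a genuine gap in the stated hypothesis, and the paper's one-line proof glosses over it in exactly the same way; your proposed reading (local Lipschitzness on the $\epsilon$-tube around $H_D(w)$, or simply noting that only the forward direction is used downstream) is a reasonable way to patch what the paper leaves implicit.
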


\begin{proof}
By above remark we simply define for $D = \{\mathbf{x}_1,\cdots,\mathbf{x}_n\}$ a corresponding dataset $D' = \{\Tilde{\mathbf{x}}_1,\cdots,\Tilde{\mathbf{x}}_n\}$ where $\Tilde{\mathbf{x}}_i \in Ball_{\epsilon/L}(\mathbf{x}_i)$, and then define the forging map $B$ sending simply $\mathbf{x}_i \rightarrow \Tilde{\mathbf{x}}_i$ in the logs, and $\Tilde{B}$ vice-versa.
\end{proof}

Note that the data point input space $X$ can be taken to be the smallest compact set containing all the $\mathbf{x}_i \in D$. This is useful for smooth  $g$ functions, as then if also $H_{D}(w)$ are all found in a compact set, then $g$ is guaranteed to be Lipschitz (by having a maximum derivative); we mention this as for DNNs $g$ is smooth (or piecewise smooth depending on the activation functions). This lemma basically shows that one's intuitive notion of similar data points translates to datasets being forgeable for smaller errors.

\subsection{Forgeability of Densely Packed Updates}
\label{ssec:dense}

Another approach is to look at the updates produced by a dataset $D$, and simply find another dataset $D'$ that can produce similar updates for all the parameters obtained in the logs of $D$, i.e., $H_D(W)$. To formalize this idea we have the following:

\begin{lemma}
If for all $\mathbf{w} \in W$ and $\mathbf{x} \in X$, there exists $\mathbf{x}_i \in D'$ such that $d(g(\mathbf{w},\mathbf{x}),g(\mathbf{w},\mathbf{x}_i)) \leq \epsilon$, then for all datasets $D \subset X$ with $H_{D g,d,0}(\mathbf{w}) \in W$, $D'$ forges $D$.
\end{lemma}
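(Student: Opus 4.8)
The plan is to take an arbitrary valid $(g,d,0)$ log over $D$ and show that swapping each data point for a suitable point of $D'$ yields a valid $(g,d,\epsilon)$ log over $D'$, which is exactly what a forging map $B: H_{D\,g,d,0} \to H_{D'\,g,d,\epsilon}$ must do. So fix a log $\{(\mathbf{w}_i,\mathbf{x}_i)\}_{i\in J} \in H_{D\,g,d,0}$, stemming from $\mathbf{w}_0$. By definition of a zero-error log we have $\mathbf{w}_{i+1} = g(\mathbf{w}_i,\mathbf{x}_i)$ for every $i$, and each checkpoint $\mathbf{w}_i$ lies in $H_D(W) \subseteq W$ by the standing hypothesis $H_{D\,g,d,0}(\mathbf{w}) \in W$; each $\mathbf{x}_i$ lies in $D \subseteq X$.

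Next I would invoke the hypothesis pointwise: for the pair $(\mathbf{w}_i,\mathbf{x}_i) \in W \times X$ there exists some $\tilde{\mathbf{x}}_i \in D'$ with $d\!\left(g(\mathbf{w}_i,\mathbf{x}_i),\, g(\mathbf{w}_i,\tilde{\mathbf{x}}_i)\right) \leq \epsilon$. To make this a genuine function rather than a step-by-step choice, I would first fix, once and for all (using the axiom of choice if needed), a selector $s: W \times X \to D'$ with $d(g(\mathbf{w},\mathbf{x}), g(\mathbf{w}, s(\mathbf{w},\mathbf{x}))) \le \epsilon$ for all $(\mathbf{w},\mathbf{x})$, and then set $\tilde{\mathbf{x}}_i := s(\mathbf{w}_i,\mathbf{x}_i)$. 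Then substituting gives $d(\mathbf{w}_{i+1},\, g(\mathbf{w}_i,\tilde{\mathbf{x}}_i)) = d(g(\mathbf{w}_i,\mathbf{x}_i),\, g(\mathbf{w}_i,\tilde{\mathbf{x}}_i)) \le \epsilon$ for every $i$, so $\{(\mathbf{w}_i,\tilde{\mathbf{x}}_i)\}_{i\in J}$ satisfies the valid $(g,d,\epsilon)$ log condition with data points drawn entirely from $D'$, hence lies in $H_{D'\,g,d,\epsilon}$.

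Finally I would define $B$ on $H_{D\,g,d,0}$ by $B(\{(\mathbf{w}_i,\mathbf{x}_i)\}_{i\in J}) = \{(\mathbf{w}_i, s(\mathbf{w}_i,\mathbf{x}_i))\}_{i\in J}$, noting that $B$ is the identity on the parameter component and maps each $\mathbf{x}_i \in D$ to $\tilde{\mathbf{x}}_i \in D'$, which is precisely the form required of a forging map in the definition of \S\ref{ssec:definition}; the previous paragraph shows its image lands in $H_{D'\,g,d,\epsilon}$. Therefore $D'$ forges $D$ with $\epsilon$, as claimed. I do not need to construct a reverse map $\tilde{B}$ here, since the statement only asserts ``$D'$ forges $D$'' and not that the two are forgeable.

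The argument is essentially mechanical, so there is no real obstacle; the only point requiring a little care is the well-definedness of $B$ as a function on entire logs, which is why I would fix the selector $s$ up front rather than choosing $\tilde{\mathbf{x}}_i$ ad hoc at each step. One should also double-check the inclusion $H_D(W) \subseteq W$ is exactly the content of the hypothesis $H_{D\,g,d,0}(\mathbf{w}) \in W$, so that the pointwise hypothesis on $g$ applies at every checkpoint appearing in any log.
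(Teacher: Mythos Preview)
Your proposal is correct and follows essentially the same approach as the paper's proof, which is a one-liner observing that the hypothesis guarantees a suitable $\mathbf{x}_i \in D'$ at every step of any log and hence a forging map exists. Your version is simply more explicit, in particular your use of a fixed selector $s$ to ensure $B$ is a well-defined function on logs is a detail the paper elides but that your write-up handles cleanly.
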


\begin{proof}
By the conditions of the lemma we know for any individual step in the log produced by $D$ that there is an $\mathbf{x}_i \in D'$ which would forge it, hence we know there exists a forging map $B: H_D \rightarrow H_{D'}$
\end{proof}

In other words, if dataset $D'$ is sufficiently large and contains data points that lead to sufficiently different updates such that the covering assumption is true, then $D'$ can forge a whole class of datasets, and in a way could be dubbed a ``universal'' forger. Realizing this dataset $D'$ could prove difficult and would likely require specifying the form of $g$. %

\subsection{Forgeability for Mean Sampling}
\label{ssec:Mean_Sampling}

In the particular case of SGD, and more generally mean sampling, we can show the existence of forgeable datasets in a more concrete setting. 
Our approach introduces %
the notion of probability for the existence of a forging map. 

Consider that $\mathfrak{D}$ is some distribution, and that datasets $D$ and $D'$ are sampled from it i.i.d. In particular we assume $D$ and $D'$ consist of minibatches, and each of the elements of the different minibatches are sampled i.i.d from $\mathfrak{D}$. This is the case when training updates are in fact averaging updates (\ie the update rule is a mean-sampler), in the sense that the datasets $D$ and $D'$ are composed of minibatches $\mathbf{\hat{x}} = \{\mathbf{x}_1,\cdots,\mathbf{x}_b\}$, and the update rules take the form of $g(\mathbf{w},\mathbf{\hat{x}}) = 1/b \sum_{i=1}^{b} g(\mathbf{w},\mathbf{x}_i)$ (\ie we also assume we are working with a mean-sampler update rule in what follows). As mentioned earlier in \S~\ref{sec:background_ml}, the commonly used minibatch SGD update rule is an example of a mean-sampler.

Consider the respective logs $H_D$ and $H_{D'}$ for $D$ and $D'$, and in particular a given tuple in a log of $H_D$ $(\mathbf{w},\mathbf{x})$, then note $g(\mathbf{w},\mathbf{x})$ can be viewed as a random variable with mean $\mathbf{\mu}$ and trace of the covariance matrix $\sigma^2 = \sum_{i=1}^{N} \sigma_i^2$ where $\sigma_i^2$ is variance of the ith component of $g(\mathbf{w},\mathbf{x})$ (where the parameter space is $N$-dimensional). This is because $\mathbf{x}$ is a random variable sampled from $\mathfrak{D}$. Similarly for $\mathbf{x'} \in D'$, $g(\mathbf{w},\mathbf{x'})$ is the same random variable as $\mathbf{x'} \sim \mathfrak{D}$.

Then note the mean of $g(\mathbf{w},\mathbf{\hat{x}}) = 1/b \sum_{i=1}^{b} g(\mathbf{w},\mathbf{x}_i)$ is still $\mu$ but now, as we are also mean sampling over all the components which are 1-dimensional (\ie their output is the ith component of the updated parameters)  and thus their individual variances get a $1/b$ factor (by the fact mean sampling 1-dimensional random variables drawn i.i.d introduces a $1/b$ factor to the variance), the trace of the covariance matrix is now $1/b \sum_{i=1}^{N} \sigma_i^2 = (1/b)\sigma^2$.  %

Now let us say that $D'$ contains $n$ minibatches and that the logs of $H_D$ all have lengths less than $m$ and there is a finite number $\alpha$ of logs. Having finite datasets is a realistic assumption (as opposed to infinite datasets which would require infinite storage), and having logs less than some finite length is to say that an entity has a precondition on how long they will train for (\ie for computational costs, which is common as one usually sets some finite number of epochs to train for as a hyperparameter). Lastly having only finitely many logs in $H_D$ is entailed by us restricting to only finite length logs and having $D$ finite (or only working with finitely many $m$-combinations of data points of $D$ in the case where $D$ is infinite, which is required for our following argument).

Thus by multi-dimensional Chebyshev's (and more specifically Markov's inequality):
\begin{multline}
    \mathbb{P}(|g(\mathbf{w},\mathbf{\hat{x}})-\mathbf{\mu}|_2 \geq \epsilon) \\ = \mathbb{P}(|g(\mathbf{w},\mathbf{\hat{x}})-\mathbf{\mu}|_2^2 \geq \epsilon^2) \\ \leq \mathbb{E}(|g(\mathbf{w},\mathbf{\hat{x}})-\mathbf{\mu}|_2^2) / \epsilon^2,
\end{multline}
where the first equality is true by the fact that $|g(\mathbf{w},\mathbf{\hat{x}})-\mathbf{\mu}|_2 \geq \epsilon$ iff $|g(\mathbf{w},\mathbf{\hat{x}})-\mathbf{\mu}|_2^2 \geq \epsilon^2$ (by monotonicity of squaring) and the last inequality is just markov's inequality. But as we squared, note  $\mathbb{E}(|g(\mathbf{w},\mathbf{\hat{x}})-\mathbf{\mu}|_2^2)$ is just the trace of the covariance of $g(\mathbf{w},\mathbf{\hat{x}})-\mathbf{\mu}$ which is $\sigma^2/b$, thus we have $\mathbb{P}(|g(\mathbf{w},\mathbf{\hat{x}})-\mathbf{\mu}|_2 \geq \epsilon) \leq \sigma^2 /(b \epsilon^2)$.

Furthermore if $\mathbf{\hat{x}} \in D$ and $\mathbf{\hat{x}}_1 \in D'$, then $|g(\mathbf{w},\mathbf{\hat{x}}) - g(\mathbf{w},\mathbf{\hat{x}}_1)|_2 \leq 2\epsilon$ with probability greater than $(1 - \sigma^2 / (b\epsilon^2))^2$ (i.e., the probability of them both being within an $\epsilon$ ball of $\mu$). This is as if both $|g(\mathbf{w},\mathbf{\hat{x}}) - \mathbf{\mu}| \leq \epsilon$ and $|g(\mathbf{w},\mathbf{\hat{x}}_1) - \mathbf{\mu}| \leq \epsilon$, then by triangle inequality $|g(\mathbf{w},\mathbf{\hat{x}}) - g(\mathbf{w},\mathbf{\hat{x}}_1)|_2 \leq |g(\mathbf{w},\mathbf{\hat{x}}) - \mathbf{\mu}| + |g(\mathbf{w},\mathbf{\hat{x}}_1) - \mathbf{\mu}| \leq 2\epsilon $. Moreover we have $\mathbb{P}(|g(\mathbf{\hat{x}}_0) - g(\mathbf{\hat{x}}_i)|_2 \leq 2\epsilon~for~some~\mathbf{\hat{x}}_i\in D') \geq (1 - \sigma^2 / (b\epsilon^2))(1-(\sigma^2 / (b\epsilon^2))^n)$ where now the second term represent the probability for at least one $\mathbf{\hat{x}}_i\in D'$ to produce $g(\mathbf{w},\mathbf{\hat{x}}_i)$ withing an $\epsilon$ ball of $\mathbf{\mu}$

Now note there are only finitely many parameters in $H_D(W)$ due to the restrictions on lengths of logs and the number of logs, thus let $(1 - \sigma_{max}^2 / (b\epsilon^2))(1-(\sigma_{max}^2 / (b\epsilon^2))^n)$ where $\sigma_{max}^2$ is the largest trace of the covariance matrix of $g(\mathbf{w},\mathbf{x}) - \mathbf{\mu}_{\mathbf{w}}$ over all $\mathbf{w} \in H_D(W)$ (and thus this term represent the minimum probability for any step to be within a $2\epsilon$ ball).

Thus the probability for us to be within a $2\epsilon$ over an entire log (which has at most length $m$) is greater than or equal to $((1 - \sigma_{max}^2 / (b\epsilon^2))(1-(\sigma_{max}^2 / (b\epsilon^2))^n))^m$, and furthermore for this to be true for all logs in $H_D$ (which there $\alpha$ of) is greater than or equal to $((1 - \sigma_{max}^2 / (b\epsilon^2))(1-(\sigma_{max}^2 / (b\epsilon^2))^n))^{m\alpha}$. 

So we have proved that with probability lower bounded by $((1 - \sigma_{max}^2 / (b\epsilon^2))(1-(\sigma_{max}^2 / (b\epsilon^2))^n))^{m\alpha}$ the dataset $D'$ forges $D$ with $d = \ell_2$, $2\epsilon$ and $g$. 

\begin{theorem}[Probablistic Forging]
\label{thm:prob_forging}
If $D$ and $D'$ are sampled i.i.d $\mathfrak{D}$ and consists of minibatches of size $b$ with the training update function $g$ being a mean sampler over the minibatch, and $|D'| = n$ and $|H_{D g,d,0}| = \alpha$ with every log $\{(\mathbf{w}_i,\mathbf{x}_i)\}_{i \in J} \in H_{D g,d,0}$ having $|\{(\mathbf{w}_i,\mathbf{x}_i)\}_{i \in J}| \leq m$. Then with probability at least $((1 - \sigma_{max}^2 / (b\epsilon^2))(1-(\sigma_{max}^2 / (b\epsilon^2))^n))^{m\alpha}$ $D'$ forges $D$ with $2\epsilon$ and $d = \ell_2$.
\end{theorem}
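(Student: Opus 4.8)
The plan is to reassemble the chain of concentration inequalities sketched in the paragraphs preceding the statement into a single lower bound, and then appeal to the ``densely packed updates'' lemma to convert a per-step covering guarantee into the existence of a forging map. First I would fix a tuple $(\mathbf{w},\hat{\mathbf{x}})$ occurring in some log of $H_D$ and regard $g(\mathbf{w},\hat{\mathbf{x}}) = \tfrac{1}{b}\sum_{i=1}^{b} g(\mathbf{w},\mathbf{x}_i)$ as a random vector, the randomness coming from the i.i.d.\ draw of the minibatch entries from $\mathfrak{D}$. Writing $\mu$ for its mean and $\sigma^2$ for the trace of the covariance of the single-sample update $g(\mathbf{w},\mathbf{x})$, i.i.d.\ averaging of the (one-dimensional) coordinate random variables contracts the trace of the covariance of $g(\mathbf{w},\hat{\mathbf{x}})$ to $\sigma^2/b$. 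A multidimensional Markov-type inequality applied to $|g(\mathbf{w},\hat{\mathbf{x}})-\mu|_2^2$ then yields $\mathbb{P}(|g(\mathbf{w},\hat{\mathbf{x}})-\mu|_2 \ge \epsilon) \le \sigma^2/(b\epsilon^2)$.

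Second, I would turn this into a statement about forging a single step. If a minibatch $\hat{\mathbf{x}}\in D$ and a minibatch $\hat{\mathbf{x}}_i\in D'$ both land within $\epsilon$ of $\mu$, the triangle inequality gives $|g(\mathbf{w},\hat{\mathbf{x}})-g(\mathbf{w},\hat{\mathbf{x}}_i)|_2 \le 2\epsilon$, i.e.\ $\hat{\mathbf{x}}_i$ forges that step with tolerance $2\epsilon$ in $\ell_2$. Since the $n$ minibatches of $D'$ are drawn independently, the probability that \emph{none} lands within $\epsilon$ of $\mu$ is at most $(\sigma^2/(b\epsilon^2))^n$, so the probability the step is forgeable is at least $(1-\sigma^2/(b\epsilon^2))\bigl(1-(\sigma^2/(b\epsilon^2))^n\bigr)$ — the first factor for $\hat{\mathbf{x}}$ itself, the second for at least one $D'$ minibatch. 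Replacing $\sigma^2$ by $\sigma_{max}^2$, the maximum of the covariance trace of $g(\mathbf{w},\mathbf{x})-\mu_{\mathbf{w}}$ over the finitely many parameters in $H_D(W)$ (finite because there are $\alpha$ logs of length at most $m$), yields a uniform per-step lower bound $p := (1-\sigma_{max}^2/(b\epsilon^2))\bigl(1-(\sigma_{max}^2/(b\epsilon^2))^n\bigr)$.

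Third, I would bound the probability that \emph{every} step of \emph{every} log is forgeable by $p^{m\alpha}$, treating the at most $m$ steps within a log and the $\alpha$ logs as independent events, and then invoke the densely packed updates lemma: its per-step covering hypothesis is exactly what we have just shown holds on an event of probability at least $p^{m\alpha}$, so on that event a forging map $B\colon H_D \to H_{D'}$ exists and $D'$ forges $D$ with $2\epsilon$ and $d=\ell_2$, which is the claim.

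The main obstacle is the independence invoked in the last two steps. The per-step events across different steps and different logs all share the same randomly drawn $D$ and $D'$, so they are not literally independent; worse, the parameters $\mathbf{w}$ appearing in a log of $H_D$ are themselves functions of the draw of $D$, so $\sigma_{max}^2$ is a priori a random quantity rather than a constant. Making the argument fully rigorous requires either conditioning on the realized parameter set $H_D(W)$ and arguing the bound holds uniformly over realizations (e.g.\ by taking $\sigma_{max}^2$ to be a deterministic supremum of the covariance trace over all $\mathbf{w}$ in the relevant parameter region), or replacing the clean product bound by a union bound of the form $1 - m\alpha(1-p)$; the paper opts for the product form, which is why the per-step probability is squared and raised to the $m\alpha$ power.
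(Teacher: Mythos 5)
Your proposal follows essentially the same route as the paper's own derivation: the variance contraction to $\sigma^2/b$ for the mean-sampler, the Markov/Chebyshev bound $\mathbb{P}(|g(\mathbf{w},\hat{\mathbf{x}})-\mu|_2\ge\epsilon)\le\sigma^2/(b\epsilon^2)$, the triangle-inequality step to $2\epsilon$, the per-step factor $(1-\sigma_{max}^2/(b\epsilon^2))(1-(\sigma_{max}^2/(b\epsilon^2))^n)$, and the product over $m\alpha$ steps are exactly the paper's remark preceding the theorem (whose ``proof'' simply points back to that remark). The independence and randomness-of-$\sigma_{max}^2$ caveat you flag is a real looseness, but it is present and unaddressed in the paper's own argument as well, so your reconstruction is faithful.
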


\begin{proof}
See above remark
\end{proof}

Thus we see some immediately important factors to forging with high probability: increasing $|D'| = n$, increasing the minibatch sizes $b$,  decreasing the maximum length of a log, or the number of logs (which would be possible by decreasing the size of $D$ and thus all combinations of its elements). In fact, we have the following corollary:

\begin{corollary}[The limit of Large Minibatch Sizes]
\label{cor:large_batches}
In the limit $b \rightarrow \infty$ for any $D$ and $D'$ both sampled i.i.d from $\mathfrak{D}$ consisting of minibatches of size $b$, with $|D| \geq n$ and $|D'| \geq n$ and $|H_{D g,d,0}|\leq \alpha$,$|H_{D' g,d,0}| \leq \alpha$, and maximum log sizes $m$, we have they are forgeable for every $\epsilon > 0$ for $g$ and $d$ as stated in Theorem~\ref{thm:prob_forging}.
\end{corollary}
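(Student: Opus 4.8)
The plan is to obtain the corollary as a direct limiting consequence of Theorem~\ref{thm:prob_forging}, applied in \emph{both} directions between $D$ and $D'$, followed by a two-term union bound. Recall that Theorem~\ref{thm:prob_forging} guarantees, for i.i.d.\ minibatch datasets with $|D'| = n$, $|H_{D g,d,0}| = \alpha$, and maximum log length $m$, that $D'$ forges $D$ (with tolerance $2\epsilon$ and $d=\ell_2$) with probability at least $p_b := \big((1 - \sigma_{max}^2/(b\epsilon^2))(1-(\sigma_{max}^2/(b\epsilon^2))^n)\big)^{m\alpha}$. The first step is to observe that, for a \emph{fixed} $\epsilon > 0$, every quantity in $p_b$ other than $b$ is a constant independent of $b$: $m$ and $\alpha$ are the fixed bounds on log length and number of logs, $n$ is fixed, and $\sigma_{max}^2$ — the largest trace of the covariance of the \emph{single-sample} update $g(\mathbf w,\mathbf x)$, $\mathbf x \sim \mathfrak D$, over the reachable parameters — is a property of $g$ and $\mathfrak D$ alone, the $1/b$ shrinkage of the minibatch-update variance having already been extracted in the derivation of Theorem~\ref{thm:prob_forging}. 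Hence $\sigma_{max}^2/(b\epsilon^2) \to 0$ as $b \to \infty$, each factor in $p_b$ tends to $1$, and a finite product of such factors gives $p_b \to 1$.

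Next I would run the symmetric argument: since $|D| \ge n$ and $|H_{D' g,d,0}| \le \alpha$, Theorem~\ref{thm:prob_forging} applies with the roles of $D$ and $D'$ exchanged, producing a forging map $\tilde B : H_{D' g,d,0} \to H_{D g,d,2\epsilon}$ with probability at least some $q_b$, and $q_b \to 1$ by the identical limiting computation. By the union bound, the probability that \emph{both} forging maps exist — i.e.\ that $D$ and $D'$ are forgeable with $2\epsilon$ in the sense of \S~\ref{ssec:definition} — is at least $1 - (1-p_b) - (1-q_b)$, which tends to $1$ as $b \to \infty$. In particular it is strictly positive for all sufficiently large $b$, so a forgeable pair exists (indeed with probability approaching one). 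Finally, since $\epsilon > 0$ was arbitrary and $2\epsilon$ ranges over all positive tolerances, replacing $\epsilon$ by $\epsilon/2$ throughout yields forgeability for every $\epsilon > 0$, which is the stated claim.

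The step I expect to be the main obstacle is the uniform-in-$b$ boundedness of $\sigma_{max}^2$. The reachable set $H_{D g,d,0}(W)$ itself depends on $b$, since different minibatch sizes trace out different trajectories from $\mathbf w_0$, so strictly one needs either (i) a global bound on $\mathrm{tr}\,\mathrm{Cov}_{\mathbf x \sim \mathfrak D}\big(g(\mathbf w,\mathbf x)\big)$ over the whole parameter region any length-$m$ log can enter — which holds, e.g., when $g$ is smooth, the logs remain in a compact set, and $\mathfrak D$ has bounded support — or (ii) an argument that increasing $b$ only contracts trajectories toward the population-mean flow and hence cannot enlarge the relevant region. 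I would state this as a standing finite-variance assumption (it is already implicit in the Chebyshev/Markov estimate underlying Theorem~\ref{thm:prob_forging}, where $\sigma_{max}^2$ is treated as finite) and flag it as the single genuinely technical hypothesis; everything else is the routine observation that finitely many factors each tending to $1$ have product tending to $1$, combined with the union bound.
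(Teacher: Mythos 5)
Your proposal is correct and follows essentially the same route as the paper: apply Theorem~\ref{thm:prob_forging} in both directions between $D$ and $D'$, note that each factor of the probability bound tends to $1$ as $b \rightarrow \infty$ since $\sigma_{max}^2/(b\epsilon^2) \rightarrow 0$, and conclude that both forging maps exist with probability tending to $1$. Your two refinements are in fact slight improvements over the paper's own write-up — the union bound is valid without the implicit independence assumption behind the paper's ``product of the individual probabilities,'' and your flag that the reachable set $H_{D}(W)$ (and hence $\sigma_{max}$) could in principle vary with $b$ is a real gap the paper papers over by simply asserting $\sigma_{max}$ is a fixed constant — but neither changes the structure of the argument.
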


\begin{proof}
Follows from the fact that if $D$ and $D'$ consists of minibatches of size $b$ with $|D'| \geq n$ (\ie contains at least $n$ minibatches) and $|H_{D g,d,0}| \leq \alpha$ with every log $\{(\mathbf{w}_i,\mathbf{x}_i)\}_{i \in J} \in H_{D g,d,0}$ having $|\{(\mathbf{w}_i,\mathbf{x}_i)\}_{i \in J}| \leq m$, then we know $D'$ forges $D$ with probability at least $((1 - \sigma_{max}^2 / (b\epsilon^2))(1-(\sigma_{max}^2 / (b\epsilon^2))^n))^{m\alpha}$ for any $\epsilon > 0$, and this probability goes to $1$ as $b \rightarrow \infty$. It is worth recalling that $\sigma_{max}$ was just the max variance of $g(\mathbf{w},\mathbf{x})$ over all $\mathbf{w} \in H_D(W)$ and hence independent of $b$ (\ie is a fixed constant). 

Completely analogously (as $D$ and $D'$ satisfy the same conditions) we have $D$ forges $D'$ for any $\epsilon > 0$ with probability going to $1$ as $b \rightarrow \infty$. 

Thus, the probability of both events occurring (the product of the individual probabilities) also goes to $1$ as $b \rightarrow \infty$ meaning we have forging maps in both directions, they are forgeable for all $\epsilon >0$ with probability going to $1$ as $b \rightarrow \infty$.
\end{proof}

\paragraph{Remark on when $b$ is up to the adversary.} This corollary is particularly important when the adversary (\ie the model owner) may choose their own minibatch size $b$, \ie they have a set of data points $D = \{\mathbf{x}_1,\cdots,\mathbf{x}_n\}$ and can modify $D$ to then consist of minibatches $\Tilde{D} = \{\mathbf{\hat{x}}_1,\cdots,\mathbf{\hat{x}}_l\}$ of size $b$. Then, by making $b$ as large as possible, they can increase the probability of finding a forging dataset $D'$ drawn i.i.d from $\mathfrak{D}$.

However, note $b$ needs to satisfy $b \leq n$ as the adversary only has $n$ data points. When the adversary can continue to sample from $\mathfrak{D}$ to add points, they could increase $n$ arbitrarily and thus $b$ could be as large as they like. Then, they can increase their probability of success arbitrarily so that there is some other dataset $D'$ that forges it for any $\epsilon > 0$. Furthermore, if any point $\mathbf{x} \in \mathfrak{D}$ has probability $0$ of being drawn, then $D'$ has probability $1$ of not containing any of the points in $D$. 

This implies that with probability $1$, a particular dataset (and in particular any of its points) can be forged with another disjoint dataset if the adversary has freedom over $b$ and can sample from $\mathfrak{D}$ indefinitely. In particular, in the limit we have probability $1$, \ie there is a finite case after obtaining minibatch size $b > \beta_D$ (where $\beta_D$ is some finite number dependent on $D$) by adding points that have a non-zero probability of being forged for almost all $D$. Thus, by existence by non-zero probability, there is at least one larger finite dataset $\Tilde{D} \subset D$ which can be forged (as $\emptyset$ has $0$ measure always, thus any non-zero measure set can not be empty).

\begin{theorem}[Existence of Forging Adversary]
\label{thm:forging_exists}
If an adversary has freedom over $b$ and access to $\mathfrak{D}$ such that they can sample from $\mathfrak{D}$ (where any point in $\mathfrak{D}$ has $0$ probability) indefinitely, then for every $\epsilon > 0$, for almost every dataset $D$ there exists $\Tilde{D} \supset D$ which can be forged by a disjoint dataset $D'$ (and in particular $\Tilde{D}$ and $D'$ are finite).
\end{theorem}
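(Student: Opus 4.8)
The plan is to prove this by the probabilistic method, stacked on top of Theorem~\ref{thm:prob_forging} and Corollary~\ref{cor:large_batches}. I would design a \emph{random} construction of the pair $(\Tilde D, D')$ for which the event ``$D'$ forges $\Tilde D$'' has probability tending to $1$ as the adversary-chosen batch size $b$ grows, while the event ``$D'\cap\Tilde D\neq\emptyset$'' has probability exactly $0$ because singletons are $\mathfrak D$-null; their favourable conjunction then has positive probability, hence is realized by some explicit finite pair.

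Concretely, fix $\epsilon>0$ and a dataset $D$. Using the adversary's freedom over $b$ and its unbounded access to $\mathfrak D$, I would have it pick a fixed number $l$ of minibatches and a size $b\ge|D|$, draw $bl-|D|$ fresh points i.i.d.\ from $\mathfrak D$, and let $\Tilde D$ be $D$ together with these fresh points, partitioned into $l$ minibatches of size $b$ with \emph{all} of $D$ packed into a single minibatch; independently it draws $D'$ as $n$ minibatches of size $b$ i.i.d.\ from $\mathfrak D$. Since $\Tilde D$ is finite and every individual point has $\mathfrak D$-probability $0$, a union bound over the finitely many points of $D'$ gives $\mathbb{P}[D'\cap\Tilde D\neq\emptyset]=0$; so almost surely $D'$ is disjoint from $\Tilde D$ and in particular omits every point to be unlearned (which lies in $D\subseteq\Tilde D$).

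For the forging half I would re-run the Chebyshev/union-bound chain behind Theorem~\ref{thm:prob_forging} for $\Tilde D$ and $D'$: the probability that $D'$ forges $\Tilde D$ (radius $2\epsilon$, $d=\ell_2$, mean-sampler $g$) is at least $\big((1-\sigma_{max}^2/(b\epsilon^2))(1-(\sigma_{max}^2/(b\epsilon^2))^n)\big)^{m\alpha}$, where $m$ is the adversary's fixed training-length cap, $\alpha=|H_{\Tilde D}|$ is bounded by a constant depending only on $l$ and $m$ (hence independent of $b$, since the number of minibatches is frozen), and $\sigma_{max}^2$ is the largest trace-covariance of $g(\mathbf w,\cdot)$ under $\mathfrak D$ over the finitely many reachable checkpoints. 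The delicate point --- which I expect to be the main obstacle --- is that $\Tilde D$ is not fully i.i.d., so the concentration step needs re-checking for the one minibatch $\mathbf{\hat{x}}$ holding $D$: its output $g(\mathbf w,\mathbf{\hat{x}})$ retains $O(1/b)$ variance but its mean is shifted from $\mu(\mathbf w)$ by $\tfrac1b\sum_{\mathbf x\in D}\big(g(\mathbf w,\mathbf x)-\mu(\mathbf w)\big)=O(|D|/b)$. Absorbing this vanishing deterministic shift into the radius, every minibatch output of $\Tilde D$ still lands within $2\epsilon$ of a common point with high probability once $b$ is large, and the displayed lower bound still tends to $1$ as $b\to\infty$, just as in Corollary~\ref{cor:large_batches}. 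Choosing $b$ large enough that this bound exceeds $\tfrac12$, the conjunction of ``$D'$ forges $\Tilde D$'' and ``$D'\cap\Tilde D=\emptyset$'' has probability at least $\tfrac12>0$; a positive-probability event is nonempty, so there exist a concrete finite $\Tilde D\supset D$ and a concrete finite disjoint $D'$ with $D'$ forging $\Tilde D$. Since $\epsilon>0$ was arbitrary the theorem follows, the qualifier ``for almost every $D$'' serving only to ensure $\sigma_{max}^2<\infty$ at the reachable checkpoints --- which can fail at most on a $\mathfrak D^{|D|}$-null set of pathological $D$ and is otherwise automatic (e.g.\ piecewise-smooth $g$ with $\mathfrak D$ of finite second moment).
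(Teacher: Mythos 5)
Your proposal is correct and follows essentially the same route as the paper: enlarge $D$ to $\Tilde D$ by sampling fresh points so that the batch size $b$ can be taken large, invoke the Chebyshev bound of Theorem~\ref{thm:prob_forging} so the forging probability tends to $1$, note that disjointness of $D'$ from the finite set $\Tilde D$ holds almost surely because singletons are $\mathfrak D$-null, and conclude existence from positive probability. The one place you go beyond the paper's own (very terse) argument is in flagging and patching the fact that $\Tilde D$ is not fully i.i.d.\ --- the minibatch carrying the fixed $D$ has a mean shifted by $O(|D|/b)$ --- which the paper silently ignores; your absorption of that vanishing shift into the radius is the right fix.
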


\begin{proof}
Simply the above remark, noting probability $1$ means "almost every" (i.e., every dataset minus a measure $0$ set)
\end{proof}

\subsection{Consequences of Theorem~\ref{thm:forging_exists} for Unlearning of minibatch SGD}
\label{ssec:conseq_of_thm_3}

We now explain the practical consequence of Theorem~\ref{thm:forging_exists} to minibatch SGD, which is the de facto training algorithm for deep neural networks. In particular, we have:

\begin{corollary}
\label{cor:unl_mini_batch_SGD}
Unlearning on minibatch SGD is not well-defined over the logs (and thus the models) of finite datasets obtained from a distribution $\mathfrak{D}$, for any such distribution $\mathfrak{D}$.
\end{corollary}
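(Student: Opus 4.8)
The plan is to obtain Corollary~\ref{cor:unl_mini_batch_SGD} as an immediate consequence of Theorem~\ref{thm:forging_exists} combined with the analysis of what forging does to unlearning in \S\ref{ssec:conseq_to_defining_unl}. First I would recall that the minibatch SGD update rule is a mean-sampler in the sense of \S\ref{sec:background_ml}, so it falls squarely within the hypotheses of Theorem~\ref{thm:prob_forging}, and hence of Theorem~\ref{thm:forging_exists}: a model owner who is free to choose the minibatch size $b$ and who can keep drawing fresh points from $\mathfrak{D}$ realizes exactly the adversary assumed there. So the analytic work is already done; what remains is to unpack the logical implication for ``well-definedness.''

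Next I would fix an arbitrary distribution $\mathfrak{D}$ and apply Theorem~\ref{thm:forging_exists}: for every $\epsilon>0$ and for almost every finite dataset $D$ drawn from $\mathfrak{D}$, there is a finite extension $\Tilde{D}\supset D$ and a finite dataset $D'$, disjoint from $\Tilde{D}$, such that $D'$ forges $\Tilde{D}$ (with error $2\epsilon$ and $d=\ell_2$). I would then pick any point $\mathbf{x}^*\in\Tilde{D}$ — for concreteness one of the original points of $D$, i.e.\ exactly the sort of point a user might request to have unlearned. Disjointness gives $\mathbf{x}^*\notin D'$, so we are precisely in the configuration studied in \S\ref{ssec:conseq_to_defining_unl}: a forging map $B:H_{\Tilde{D}}\to H_{D'}$ that is the identity on the parameter coordinates but whose image logs never use $\mathbf{x}^*$.

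The heart of the argument is then to spell out why this defeats unlearning for $\Tilde{D}$. Take any valid $(g,d,0)$ training log on $\Tilde{D}$ producing a model $\mathbf{w}$; applying $B$ yields a valid $(g,d,2\epsilon)$ log achieving the same checkpoint sequence (in particular the same final $\mathbf{w}$) without ever using $\mathbf{x}^*$. Consequently: (i) under the parameter-space definition used by approximate unlearning, $\mathbf{w}$ already lies in the set of models reachable without $\mathbf{x}^*$, so the definition pronounces $\mathbf{x}^*$ unlearned with no modification to the model — a vacuous, self-contradictory requirement; and (ii) under the log-level definition, a model owner can submit the forged $H_{D'}$ log as a Proof-of-Unlearning that passes verification despite no retraining having occurred, so no sequence of checkpoints certifies unlearning either. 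Hence there is no notion of ``having unlearned $\mathbf{x}^*$'' that is a well-defined function of the log, or of the final model, of $\Tilde{D}$; since $\Tilde{D}$ is itself a finite dataset obtainable from $\mathfrak{D}$ and this holds for almost every $D$ and for every $\mathfrak{D}$, the corollary follows.

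The main obstacle is not computational — Theorem~\ref{thm:forging_exists} carries all the probabilistic weight — but rather in stating ``not well-defined'' precisely and checking that the two competing definitions of unlearning (parameter-level and log-level) are simultaneously killed by the \emph{same} forged log. A secondary subtlety to handle carefully is the passage from Theorem~\ref{thm:forging_exists}, which outputs an enlarged dataset $\Tilde{D}$ and (for guaranteed disjointness) treats $\mathfrak{D}$ as essentially non-atomic, to the corollary's phrasing over ``finite datasets obtained from $\mathfrak{D}$, for any $\mathfrak{D}$'': one should note that $\Tilde{D}$ is still such a dataset, and that in the presence of atoms it is enough for $D'$ to avoid the single point $\mathbf{x}^*$, which occurs with positive probability unless $\mathfrak{D}$ is a point mass (a degenerate case in which unlearning is trivial anyway).
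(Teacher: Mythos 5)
Your proposal is correct and follows essentially the same route as the paper: invoke Theorem~\ref{thm:forging_exists} to obtain, for every $\epsilon>0$, a finite dataset $\Tilde{D}$ forgeable by a disjoint $D'$, then apply the argument of \S\ref{ssec:conseq_to_defining_unl} and observe that a single such instance suffices for ill-definedness over finite datasets. Your additional care in handling atomic distributions and in explicitly checking that both the parameter-level and log-level definitions are defeated by the same forged log is more thorough than the paper's own (quite terse) proof, but it does not change the underlying argument.
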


\begin{proof}
To prove that unlearning is not well-defined when it only involves logs (or models), we need simply that for any verification threshold $\epsilon > 0$ there $\exists D \subset \mathfrak{D}$ such that $D$ can be forged by a disjoint dataset $D'$; thus the logs (no matter the threshold) carry no information of what dataset was used to train the model, and thus by the remark of \S~\ref{ssec:conseq_to_defining_unl} unlearning is not well-defined for $D$. This, in turn, implies that in general unlearning is not well-defined over finite datasets (as being not well-defined for one instance in the domain, in this case of finite datasets, is sufficient). Note, we assume the verifier can not use $\epsilon = 0$ simply because of floating point precision. 

By Theorem~\ref{thm:forging_exists} we know for every $\epsilon$, almost all finite datasets $D$ that could be obtained i.i.d are contained in a larger dataset $\Tilde{D}$ which can be forged with $\epsilon$ by a disjoint dataset $D'$. Note here that $D$ does not need to have been obtained by i.i.d sampling, just the fact $D$ \textit{could be} obtained by i.i.d sampling. Thus we have existence of $\Tilde{D}$, which completes the proof.
\end{proof}

This is the main conclusion of this paper: unlearning cannot be well-defined for minibatch SGD over the logs. This is best understood because forgeability implies that models or checkpoints (or distribution of models or checkpoints) are both unlearnt and not unlearnt at the same time. That is, the training log could be obtained by both not training with a particular data point, and by training with it and then employing a forging map.

Hence for the community to move forward with using properties of the training/unlearning algorithm outputs to define its unlearnability (as was used in all prior work), one needs to put constraints on the training process. This is necessary to remove the possibility of these pathological examples\footnote{We have by no means exhaustively characterized all such examples, we have just shown examples exist. There could be other aspects of training (besides using mean-sampler $g$) which can create this possibility. In fact, notably, we have not even shown what the examples we know exist look like. }. Then, auditing processes will need to verify that this process was indeed used by the model owner. Overall, this implies that unlearning should be defined \textit{algorithmically} and designed such that it is amenable to external scrutiny. 

One possible future direction in this vein is to design verifiable ML processes to give guarantees about the points used (or not used) in a particular computation. This would prove that an entity was not able to employ a forging map in lieu of unlearning. We discuss this further in \S~\ref{ssec:verifiable_ML}.

\section{An Instantiation of a Forging Attack}
\label{sec:attack}

Theorem~\ref{thm:forging_exists}, and subsequently Corollary~\ref{cor:unl_mini_batch_SGD}, analytically demonstrate the existence of forging examples. We now experimentally show the existence of forging maps on a practical dataset. This allows us to construct forging maps to study exact unlearning, and in particular PoUL for verifying exact unlearning through retraining. As mentioned earlier in \S~\ref{ssec:definition}, the attack we propose here does not necessarily bypass all PoL verification schemes, and hence one might still be content with using a verification scheme that defends against our attack to verify unlearning in practice. However note this essentially starts an arms race between better forging and better PoL verification without other constraints to rule out forging (see \S~\ref{ssec:verifiable_ML}).

The methodology for the attack we instantiate draws from the data ordering attack proposed by~\citeauthor{shumailov2021manipulating}~\cite{shumailov2021manipulating}. Their attack takes advantage of the stochasticity present in minibatches and the effect of large minibatch sizes. This is well-aligned with our analytical result in Theorem~\ref{thm:prob_forging} showing that large minibatch sizes facilitate forging. Specifically, we focus on a forging map between $D$ to $D' = D \setminus \mathbf{x}^*$ for some data point $\mathbf{x}^*$. This corresponds to a single point being unlearned. Moreover, we focus our empirical investigation on updates that originally used $\mathbf{x}^*$ to show $D$ can be forged.

The main takeaway from our experiments is simply that one can reach verification error $\epsilon$ (in a PoUL) below $10^{-4}$. For suitable large minibatches, we can reach $\epsilon \approx 10^{-6}$ (before adding the noise introduced by hardware). This we believe shows the practicality of forging, and motivates future work to improve algorithms for finding a forging map.

\subsection{Threat Model}

We now describe the specific threat model for this attack, which is the one of \S~\ref{sec:strawman}: a PoL where we have access to both the model and dataset used, and want to now replace all the steps in the PoL with a data point $\mathbf{x}^*$ with another as to now have a forging map to $D' = D\setminus \mathbf{x}^*$ for that particular PoL. 

We in particular focus on a random sample of steps with $\mathbf{x}^*$ at different checkpoints (and also varying what point $\mathbf{x}^*$ is) to illustrate the general effectiveness of this attack. Here by checkpoints, we mean the state of the model parameters saved after each epoch of training. 

The model used is a LeNet5~\cite{lecun1989backpropagation} trained on MNIST~\cite{10027939599}: our results in this section are meant to be understood as initial results towards future study. In all the experiments we work with a fixed learning rate $\eta = 0.01$ and the SGD optimizer.

\subsection{Attack Description}

We take a model trained for $N$ epochs before having a training update with $\mathbf{x}^*$ (in this case a minibatch containing the specific data point). This step takes initial parameters $\mathbf{w}_i$ to the parameters $\mathbf{w}_{i+1}$. Our first forging algorithm then is to:
\vspace{-1mm}
\begin{enumerate}
\itemsep0em
    \item Sample $n$ data points uniformly from $D/\mathbf{x}^*$
    \item Sample $M$ minibatches uniformly from the selected $n$ data points
    \item Select minibatch $\hat{\mathbf{x}}_j$ which minimizes $||\mathbf{w}_{i+1} - g(\mathbf{w}_i,\hat{\mathbf{x}}_j)||_{2}$, \ie sort the minibatches by the error $||\mathbf{w}_{i+1} - g(\mathbf{w}_i,\hat{\mathbf{x}}_j)||_{2}$ and select the minibatch with the least error as our "forging minibatch"
\end{enumerate}

Note that, ignoring sorting overhead, the cost of running this algorithm is $M$ gradient steps (as we compute a gradient for each of the $M$ randomly chosen minibatches in step 3). In general if one wants to forget $\mu$ datapoints, and trained $N$ epochs, then they need to forge $\mu N$ steps. Using our algorithm we then have a cost of $\mu N M$ gradient steps.

We further investigate factors which may increase or decrease our verification error, in particular: minibatch size, and number of data points sampled $n$. We also test if further greedy selection by interchanging individual data points in the selected best minibatch (rather than interchanging minibatches of data points) can further decrease error. Note, if we take $T-1$ greedy updates steps after running our original algorithm, each also looking at $M$ possible interchanging data points, our costs are now $\mu N MT$.

After demonstrating the above, we attempt to forge with an even smaller subset of $D$, in which case our forging algorithm is:
\vspace{-1mm}
\begin{enumerate}
\itemsep0em
    \item Sample size $n$ subsets from $D$
    \item Repeat algorithm 1 for the subset
\end{enumerate}

Being able to forge with these smaller size $n$ subsets shows we could forge $|D| -n$ points with only the $n$ points, \ie could forge having unlearnt large portions of the dataset.
\subsection{Results} 

\paragraph{Forging when varying $n$: Larger is marginally better.}

In Figure~\ref{fig:err_vs_n_samples} we plot our verification error for 100 different trial runs (each of which had randomized checkpoints and $\mathbf{x}^*$) as a function of $n$, with fixed minibatch size of $1000$. Observe that there is a modest decrease with increased $n$, however, we always had error $\approx 10^{-6}$. This suggests that the number of samples one uses to then create minibatches is not of significant importance. In particular, for more efficient forging algorithms one can get away with only sampling a smaller set to then construct minibatches from.

\paragraph{Minibatch Size: Larger is significantly better.}
In Figure~\ref{fig:l2_difs_batch_size}, we plot the PoL's verification error as a function of the minibatch size. In support of our theory (\S~\ref{ssec:Mean_Sampling}) which stated that in the limit of large minibatch sizes we can always forge, we see increasing the minibatch size plays a significant role in reducing the verification error, dropping several magnitudes. For large minibatch sizes, it reaches $\approx 10^{-5}$ error. In fact, the trend notably looks like what one might expect from a $1/b$ factor in the minibatch size, matching the formulation  (modulo exponents) stated in Theorem~\ref{thm:prob_forging}.

\paragraph{Greedy selection of individual points: improves slightly.}
We next employ the greedy algorithm where we update one data point at a time (rather than substituting for the entire minibatch). We see in Figure~\ref{fig:l2_difs_greedy} that we can slightly improve the verification error and that the improvement tends to converge after less than $10$ updates. Note that we plotted the relative improvement, \ie error $1$ is the same as before switching to the greedy updates and $0.9$ is $0.9$ times the error we had before the greedy updates. The main takeaway from this is that a greedy procedure does not significantly improve error, especially when compared to the effect of minibatch sizes.

\paragraph{Smaller sets: they work.}
Now when considering fixed smaller sets of 1000 to sample from, depicted in Figure~\ref{fig:err_disjoint_set}, we see we still have error less than $10^{-4}$. This is larger than $~\approx 10^{-6}$ we were achieving before, but we are also working with $1/60$ the fraction of the size of the larger datasets from before, indicating that smaller datasets can still forge reasonably well. This notably suggests that in general, in the limit of large minibatch sizes, a smaller dataset is equivalent to a larger dataset, though we leave that direction for future work.

\paragraph{Main Takeaways.} The results we presented show that forging for $\epsilon$ several magnitudes less than $1$ is practical. Thus, forging is not only possible for very low thresholds, but is also relatively easy as our algorithm simply samples randomly. Future work may extend our initial results by investigating how these results hold in different domains, (\ie for language models). Alternatively, if given the hardware capacity, one might look into how much lower one can decrease the error by increasing the minibatch sizes past what we tested. Furthermore, our results for smaller sets indicate that they can still produce low error; an interesting follow-up question is how low can the error for smaller datasets be pushed. Related to that, can one find a particular smaller dataset that produces the same error as we were getting when forging with the larger datasets? This question seems similar to dataset distillation, however, we leave formal or empirical relations to future work.

\begin{figure}[t]
    \vspace{-5mm}
    \centering
    \includegraphics[width=\linewidth]{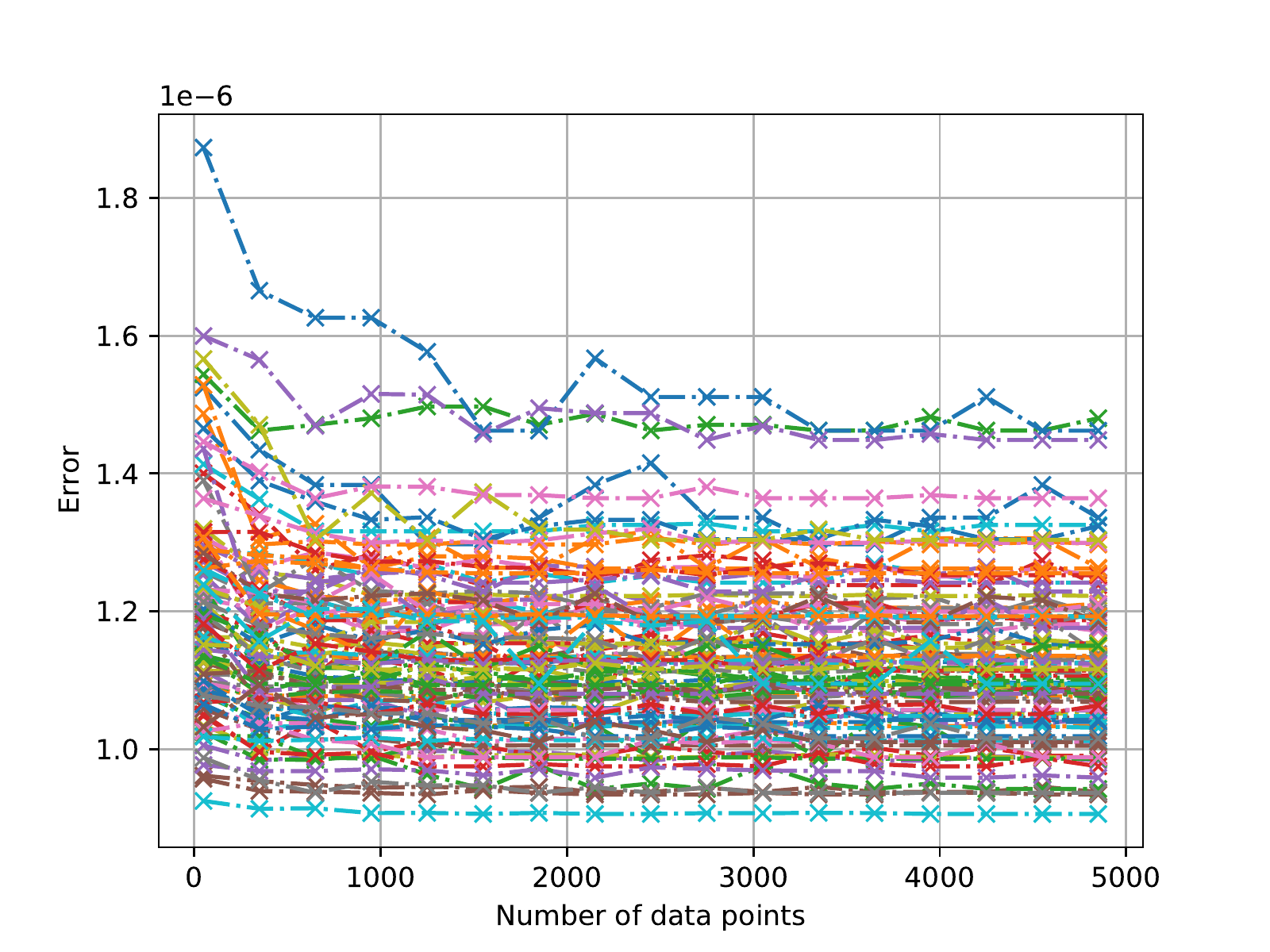}
    \caption{Verification error as a function of the number of samples in the dataset: one can see the error is always in the order of $10^{-6}$ and there is a slight decrease in the error as the dataset becomes larger. Note here we used a fixed minibatch size of $1000$, and the colors denote different runs.}
    \label{fig:err_vs_n_samples}
\end{figure}

\begin{figure}[t]
\vspace{-6mm}
    \centering
    \includegraphics[width=\linewidth]{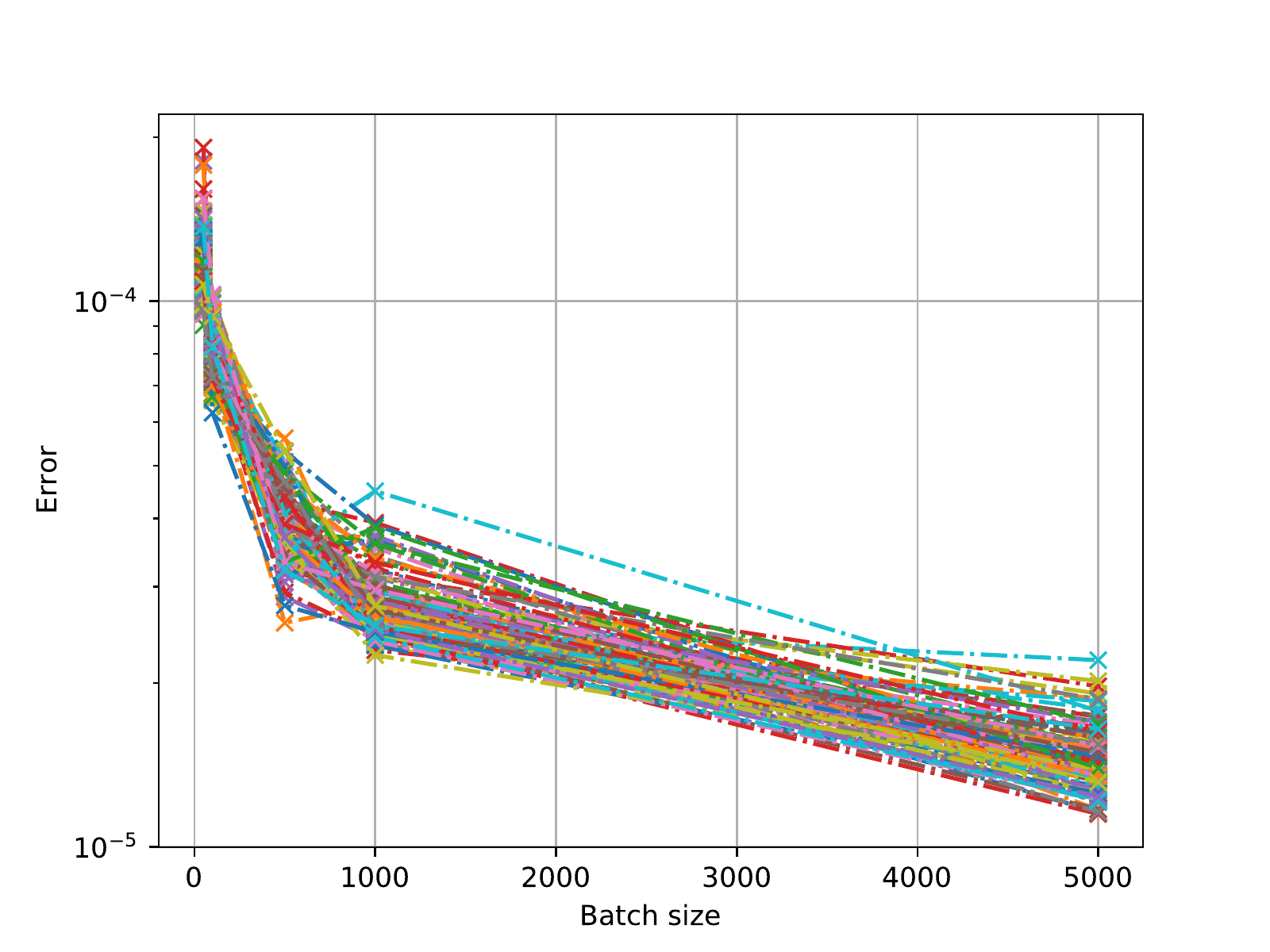}
    \caption{Verification error as a function of the minibatch size: the error drops drastically as the minibatch size increases from $0$ to $1000$. Afterwards the drop becomes more gradual. Note again the colors denote different runs.}
    \label{fig:l2_difs_batch_size}
\end{figure}

\begin{figure}[t]
\vspace{-6mm}
    \centering
    \includegraphics[width=\linewidth]{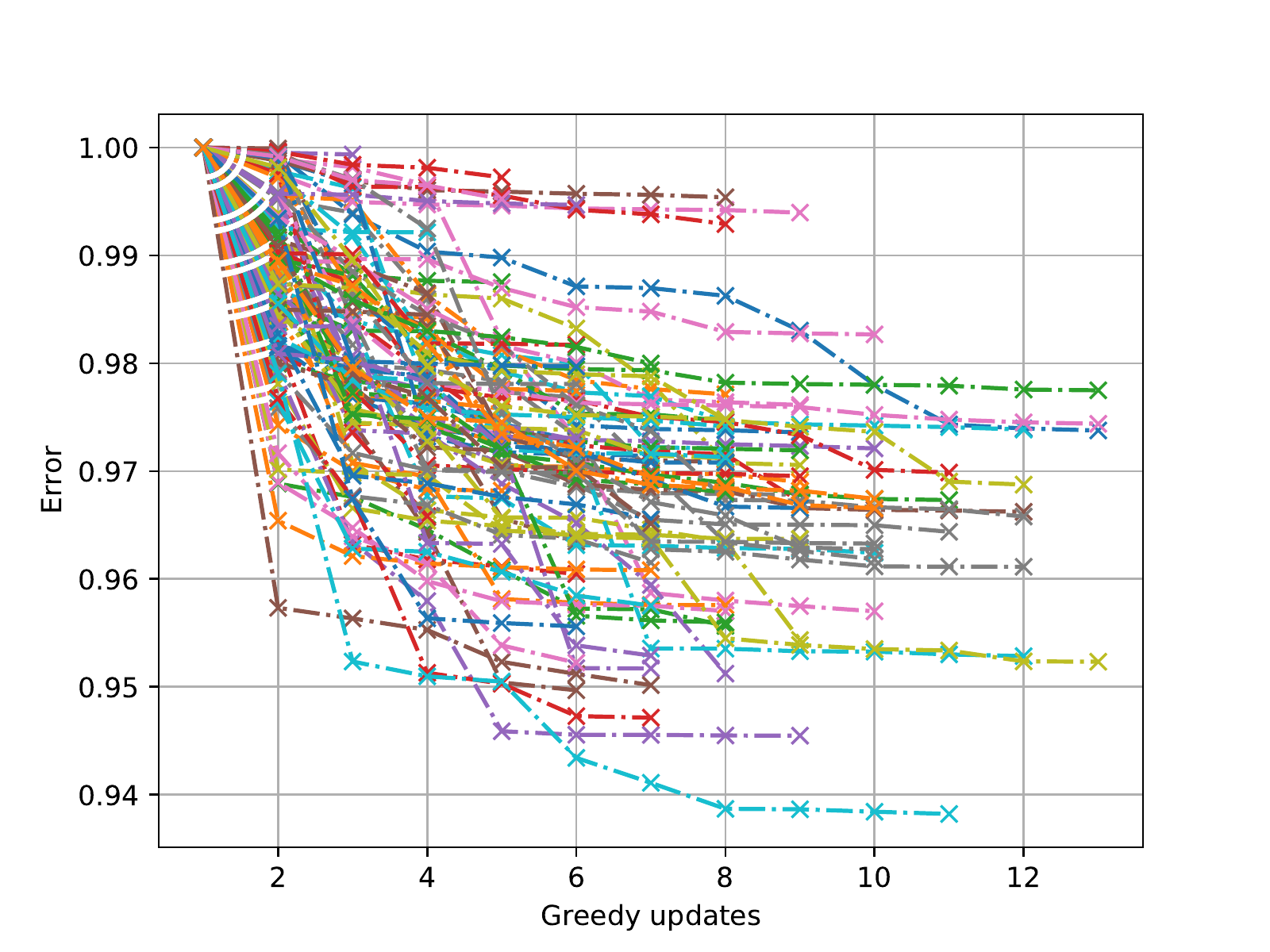}
    \caption{Relative verification error as a function of the number of greedy updates (selecting individual points rather than substituting for entire minibatches). The curves are generated by repeating the experiment on 100 individual models, starting from a randomly found minimum. Note how we only observe a marginal benefit by increasing the number of updates. Furthermore, not again that different colors denote different runs.\vspace{-5mm}}
    \label{fig:l2_difs_greedy}
\end{figure}

\begin{figure}[t]
\vspace{-6mm}
    \centering
    \includegraphics[width=\linewidth]{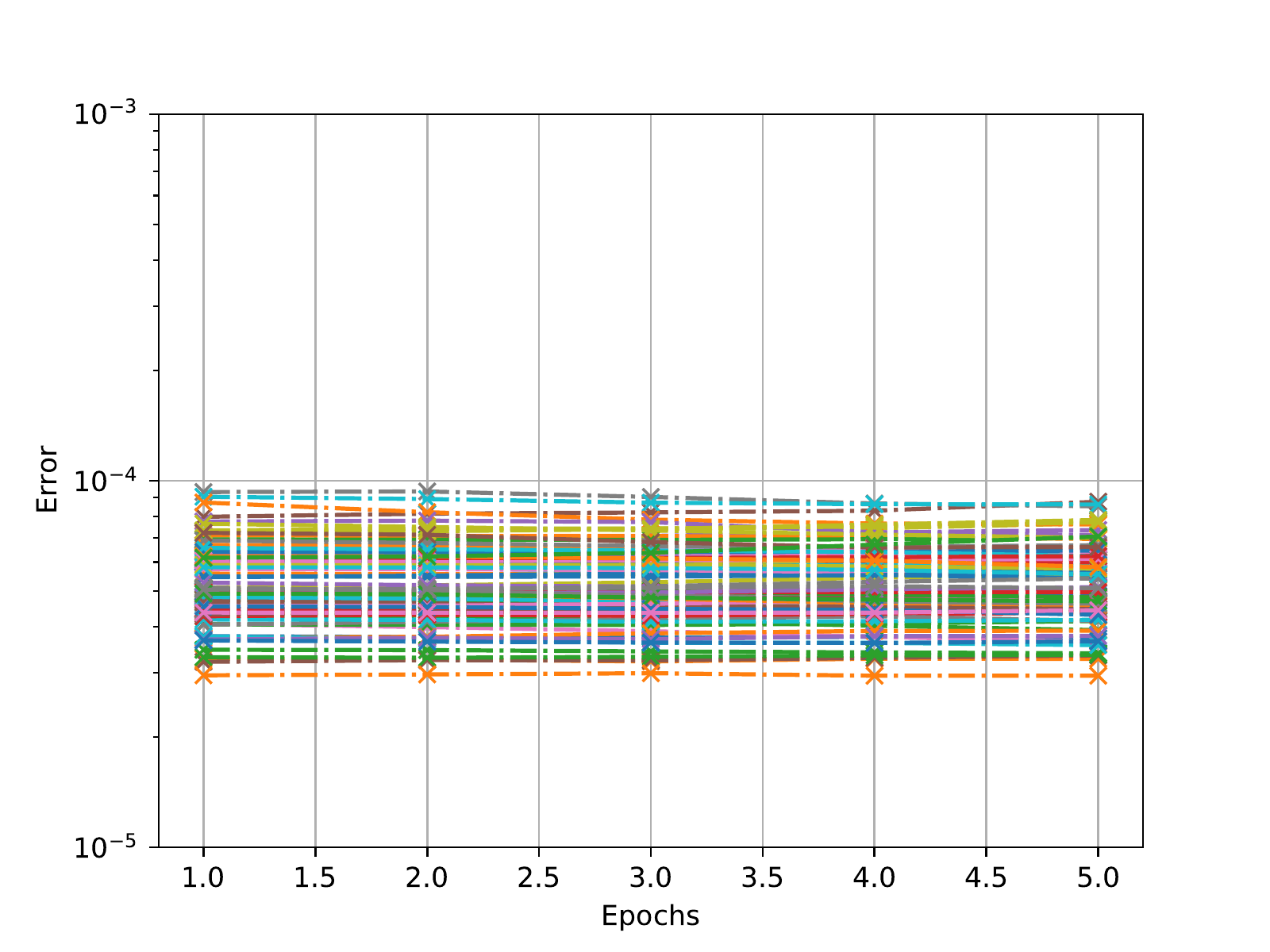}
    \caption{Verification error when forging using samples from a smaller dataset, plotted against the epoch where we are forging. One can see the error is still quite low (less than $10^{-4}$). Note again the different colors denote different runs.\vspace{-5mm}}
    \label{fig:err_disjoint_set}
\end{figure}

\section{Discussion}

\subsection{Relation to Information Leakage}
Our work shows that there are cases where an entity has plausible deniability that they did (not) train with a particular point, and that this likelihood increases with batch size. This is seemingly a strong privacy-like statement. Because our proof of Theorem \ref{thm:prob_forging} relies on a form of convergence to the mean which increasing batch size induces, forging is in this case akin to having a very average training run. One might interpret this as meaning that this run leaks less information about the specific data points used.

In fact, one might generally view forging by datasets drawn from the same distribution (or larger finite set) as representing how stable training is to the underlying distribution and how little it leaks about specific data points. Investigating the privacy aspects presented by this could be an interesting line of future work. Notably, this is quite different from the flavor of guarantee provided by differential privacy \cite{dwork2006calibrating} as we are not adding noise, but quantifying the underlying noise of the distribution and how the training algorithm "averages it out".

\subsection{On the Need for Verifiable Training and Unlearning Algorithms}
\label{ssec:verifiable_ML}

Before getting to why verifiable computing would be beneficial, it is worth noting why further constraining the unlearning process would not work. Simply not allowing a log to ever be changed would not work as an entity can simply create a new log which forged the previous log. Perhaps another constraint of only ever being able to train once would circumvent this issue, but it is difficult to enforce. 
Essentially it seems difficult (though we will not claim impossible) to put enough constraints on learning so that the information of the valid log can never lead to a forged log.

Thus, adding verifiable computing to the paradigm seems like a promising direction. In this case one knows exactly what data point was used for a specific computation; however, this still does not prevent an entity from using that information to then compute with the forged example. Rather there would need to be another primitive which controls the information flow allowed for the computations. For example, putting restrictions on where the data was obtained from in an effort to prevent forging datasets. Future work will have to investigate the data provenance requirements needed for this given that our existence of forging examples \textit{does not tell us what exactly they look like}.

Nevertheless, this is a direction we expect future work will address, and we hope that the forging framework we developed might prove useful in also proving when it might not exist given further conditions.

\subsection{Extending the Forging Framework}
\label{ssec:extending_framework}

Building on that last point, we expect that the forging framework can itself be further studied as its own subject, or variations of it as we described in \S~\ref{sec:forging}. This is to say that though we set out specifically to show how unlearning with minibatch SGD is not well-defined, the tools we developed could be used in quite a more general fashion for studying ML.

We already illustrated an interesting property of our current framework in Theorem~\ref{thm:charaterizing_equiv_class}, which is the $\epsilon =0$ equivalence class,
and possible variations of the forging framework may admit other useful properties. In particular, it might be possible to use the $\epsilon$ we get from forgeability (or variations of it) to define a distance metric over datasets; we were not able to obtain a proof that the metric $\Bar{d}(D,D') = inf \{\epsilon:~D~\text{and}~D'~\text{are}~\epsilon~\text{forgeable}\}$ defined over all datasets (modulo the $\epsilon = 0$ forgeable equivalence class) satisfies the triangle inequality, though we also do not have particular counter-example for it not to satisfy it (the specific issue is trying to compose forging maps, as note the error in the input and the error in the output are different in our particular flavour of forging). In particular, a metric (in the formal mathematical sense) would give a topology over the datasets, and would open further questions about what properties the topology satisfies: for example can such a space be complete under some constraints on $g$?

Another particularly interesting question is how does the space of forgeable datasets, perhaps in particular the $\epsilon = 0$ equivalence classes, change with the model architecture (note throughout this paper we assumed the model architecture was fixed). Understanding this might give insights into the variance between model parameters obtained for certain datasets (modulo the $\epsilon = 0$ equivalence class). Future work will analyze the benefits and drawbacks of such variance.

\subsection{Importance to ML Governance}
\label{ssec:governance}

The concept of \textit{machine learning governance}~\cite{chandrasekaran2021sok} aims to standardize the interactions between principals in the entire life cycle (\eg deploy, use, patch, and retire) of ML models. The goal is to ensure that alongside all the benefits provided by ML models, the risks they create are controlled and restricted. 
Two key aspects of ML governance are \textit{accountability} and \textit{assurance}, where the former means to hold some entities accountable for ML failures and the latter assures that an ML model meets certain security requirements. Here, we answer the following questions: (1) why are current auditing techniques not sufficient to hold model owners accountable for unlearning in the presence of forging, (2) why one cannot attribute (\ie another way to achieve accountability) a failure or property of a model purely to data when there exists forging examples, and (3) after realizing the existence of forging, how does a model owner provide assurance to users that the latter's data is indeed unlearned?

\paragraph{Auditing Unlearning} Machine learning auditing is defined as the process of formal examination/verification of claimed properties of ML models, done by a regulator. When it comes to exact unlearning, the regulator wants to verify that the model owner has indeed removed the impact of the data to be unlearned from the model. It is worth noting that \citeauthor{chandrasekaran2021sok}~\cite{chandrasekaran2021sok} suggests ML auditing can be supported by logging (\eg PoL). However, as described in \S~\ref{ssec:conseq_to_defining_unl}, when there exists a forging map between the dataset and itself with the unlearned data points removed (or another dataset without these points), then it is not possible to audit unlearning by a PoL log. Instead, the regulator may need auditing techniques at the level of the algorithms, such as verifiable training mentioned in \S~\ref{ssec:verifiable_ML}.

\paragraph{Data Attribution} Data attribution is the process of associating a certain model behaviour to a particular data point (or a subset of data points). This is also sometimes known as influence~\cite{koh2017understanding}. What our work on forging shows is that for a specific training log, the minibatch that makes the update from a checkpoint to the next in the log is not unique. More generally, this means that attributing failures (or properties) of a model in the parameter space to a specific set of data points is ill-defined when forging exists. That is, analogous to how one must define unlearning as being a product of using a specific algorithm, one cannot attribute any failures of a model to the presence/absence of certain training points without stronger assumptions on the training process: for example that the process does not allow for any forging. %

For instance, there exists a naive reasoning about fairness of ML that bias in data is the only source of unfairness. Forging is a counterexample for this, because forging shows that we cannot attribute unfairness to data without ruling out specific forging examples through algorithmic constraints on top of having trained with some update rule $g$. This is also supported by the emerging belief that unfairness stems from both issues with data and the algorithmic choices (loss, architecture, etc.) \cite{mitchell_algorithmic_2021}, and emphasizes that verifying unfairness requires auditing the fairness of the algorithm as well.

This ill-definedness is also true beyond failures of the model. 
For example, a good performance now cannot be exclusively attributable to a specific dataset unless forging examples are ruled out for that log. %
However, this can in itself be informing. Recall that we mentioned in \S~\ref{ssec:extending_framework} the forging framework may provide a useful tool for studying ML, and $\epsilon = 0$ classes in some sense characterize all datasets that lead to similar performance. 
For instance, understanding and studying the $\epsilon = 0$ class for a dataset producing high-performance models could help tasks like dataset distillation by finding smaller datasets that can lead to similar-performance models under suitable restrictions on the data sequence sampling.

\paragraph{Assurance} Assurance is what the model owners wish and need to provide to the users to gain their trust: that the models are guaranteed to meet certain security and/or privacy standards, and the risks in contributing training data for these models are controlled to certain degrees. In the context of unlearning, the benign model owners would like to assure users (or data providers) that unlearning is done correctly by providing them with a verifiable certificate. However, note that even if a method to verify exact unlearning is introduced (\eg verifiable training-based methods described in \S~\ref{ssec:verifiable_ML}), it would likely be too costly for a normal model user to verify the training of ML models. Hence, achieving assurance will require future work to design efficient solutions. Alternatively, the cost of verification could be offset to a regulator trusted by the users to act on their behalf. %

\section{Conclusions}

Does our work jeopardize all progress made thus far towards unlearning? Not exactly. It means we now need to go back and redefine what unlearning is, such as by adding more restrictions on the training process to make it harder to find forging examples (\S~\ref{ssec:verifiable_ML}). We also need to see whether training without the data point is a well-defined way to define unlearning (in a way that is no longer susceptible to forging examples). That said, our work does show current faults in defining unlearning, and the more general problem of data attribution when analyzing both correct and incorrect ML predictions. We also outlined future avenues for the concept of forgeability to help understand ML algorithms better.

\section*{Acknowledgments}

We would like to acknowledge our sponsors, who support our research with financial and in-kind contributions: CIFAR through the Canada CIFAR AI Chair, DARPA through the GARD project, Intel, Meta, NFRF through an Exploration grant, and NSERC through the COHESA Strategic Alliance. Resources used in preparing this research were provided, in part, by the Province of Ontario, the Government of Canada through CIFAR, and companies sponsoring the Vector Institute. We would like to thank members of the CleverHans Lab for their feedback. We also thank the reviewers for their useful feedback, and Ting Wang for being our shepherd.

\bibliographystyle{plainnat}
\bibliography{references.bib}

\begin{thebibliography}{25}
\providecommand{\natexlab}[1]{#1}
\providecommand{\url}[1]{\texttt{#1}}
\expandafter\ifx\csname urlstyle\endcsname\relax
  \providecommand{\doi}[1]{doi: #1}\else
  \providecommand{\doi}{doi: \begingroup \urlstyle{rm}\Url}\fi

\bibitem[Baumhauer et~al.(2020)Baumhauer, Sch{\"o}ttle, and
  Zeppelzauer]{baumhauer2020machine}
Thomas Baumhauer, Pascal Sch{\"o}ttle, and Matthias Zeppelzauer.
\newblock Machine unlearning: Linear filtration for logit-based classifiers.
\newblock \emph{arXiv preprint arXiv:2002.02730}, 2020.

\bibitem[Bourtoule et~al.(2019)Bourtoule, Chandrasekaran, Choquette-Choo, Jia,
  Travers, Zhang, Lie, and Papernot]{bourtoule2019machine}
Lucas Bourtoule, Varun Chandrasekaran, Christopher~A Choquette-Choo, Hengrui
  Jia, Adelin Travers, Baiwu Zhang, David Lie, and Nicolas Papernot.
\newblock Machine unlearning.
\newblock \emph{arXiv preprint arXiv:1912.03817}, 2019.

\bibitem[Brown et~al.(2020)Brown, Mann, Ryder, Subbiah, Kaplan, Dhariwal,
  Neelakantan, Shyam, Sastry, Askell, et~al.]{brown2020language}
Tom~B Brown, Benjamin Mann, Nick Ryder, Melanie Subbiah, Jared Kaplan, Prafulla
  Dhariwal, Arvind Neelakantan, Pranav Shyam, Girish Sastry, Amanda Askell,
  et~al.
\newblock Language models are few-shot learners.
\newblock \emph{arXiv preprint arXiv:2005.14165}, 2020.

\bibitem[Cao and Yang(2015)]{cao2015towards}
Yinzhi Cao and Junfeng Yang.
\newblock Towards making systems forget with machine unlearning.
\newblock In \emph{2015 IEEE Symposium on Security and Privacy}, pages
  463--480. IEEE, 2015.

\bibitem[{Chandrasekaran} et~al.(2021){Chandrasekaran}, {Jia}, {Thudi},
  {Travers}, {Yaghini}, and {Papernot}]{chandrasekaran2021sok}
Varun {Chandrasekaran}, Hengrui {Jia}, Anvith {Thudi}, Adelin {Travers},
  Mohammad {Yaghini}, and Nicolas {Papernot}.
\newblock {SoK: Machine Learning Governance}.
\newblock \emph{arXiv e-prints}, art. arXiv:2109.10870, September 2021.

\bibitem[Devlin et~al.(2018)Devlin, Chang, Lee, and Toutanova]{devlin2018bert}
Jacob Devlin, Ming-Wei Chang, Kenton Lee, and Kristina Toutanova.
\newblock Bert: Pre-training of deep bidirectional transformers for language
  understanding.
\newblock \emph{arXiv preprint arXiv:1810.04805}, 2018.

\bibitem[Dwork et~al.(2006)Dwork, McSherry, Nissim, and
  Smith]{dwork2006calibrating}
Cynthia Dwork, Frank McSherry, Kobbi Nissim, and Adam Smith.
\newblock Calibrating noise to sensitivity in private data analysis.
\newblock In \emph{Theory of cryptography conference}, pages 265--284.
  Springer, 2006.

\bibitem[Feldman(2020)]{feldman2020does}
Vitaly Feldman.
\newblock Does learning require memorization? a short tale about a long tail.
\newblock In \emph{Proceedings of the 52nd Annual ACM SIGACT Symposium on
  Theory of Computing}, 2020.

\bibitem[Ginart et~al.(2019)Ginart, Guan, Valiant, and Zou]{ginart2019making}
Antonio Ginart, Melody~Y Guan, Gregory Valiant, and James Zou.
\newblock Making ai forget you: Data deletion in machine learning.
\newblock \emph{arXiv preprint arXiv:1907.05012}, 2019.

\bibitem[Golatkar et~al.(2020{\natexlab{a}})Golatkar, Achille, and
  Soatto]{golatkar2020eternal}
Aditya Golatkar, Alessandro Achille, and Stefano Soatto.
\newblock Eternal sunshine of the spotless net: Selective forgetting in deep
  networks.
\newblock In \emph{Proceedings of the IEEE/CVF Conference on Computer Vision
  and Pattern Recognition}, pages 9304--9312, 2020{\natexlab{a}}.

\bibitem[Golatkar et~al.(2020{\natexlab{b}})Golatkar, Achille, and
  Soatto]{golatkar2020forgetting}
Aditya Golatkar, Alessandro Achille, and Stefano Soatto.
\newblock Forgetting outside the box: Scrubbing deep networks of information
  accessible from input-output observations.
\newblock In \emph{European Conference on Computer Vision}, pages 383--398.
  Springer, 2020{\natexlab{b}}.

\bibitem[Graves et~al.(2020)Graves, Nagisetty, and Ganesh]{graves2020amnesiac}
Laura Graves, Vineel Nagisetty, and Vijay Ganesh.
\newblock Amnesiac machine learning.
\newblock \emph{arXiv preprint arXiv:2010.10981}, 2020.

\bibitem[Guo et~al.(2019)Guo, Goldstein, Hannun, and Van
  Der~Maaten]{guo2019certified}
Chuan Guo, Tom Goldstein, Awni Hannun, and Laurens Van Der~Maaten.
\newblock Certified data removal from machine learning models.
\newblock \emph{arXiv preprint arXiv:1911.03030}, 2019.

\bibitem[Jia et~al.(2021)Jia, Yaghini, Choquette-Choo, Dullerud, Thudi,
  Chandrasekaran, and Papernot]{jia2021proof}
Hengrui Jia, Mohammad Yaghini, Christopher~A Choquette-Choo, Natalie Dullerud,
  Anvith Thudi, Varun Chandrasekaran, and Nicolas Papernot.
\newblock Proof-of-learning: Definitions and practice.
\newblock \emph{arXiv preprint arXiv:2103.05633}, 2021.

\bibitem[Koh and Liang(2017)]{koh2017understanding}
Pang~Wei Koh and Percy Liang.
\newblock Understanding black-box predictions via influence functions.
\newblock In \emph{International Conference on Machine Learning}, pages
  1885--1894. PMLR, 2017.

\bibitem[Lecun and Cortes(1998)]{10027939599}
Y.~Lecun and C.~Cortes.
\newblock The mnist database of handwritten digits.
\newblock \emph{http://yann.lecun.com/exdb/mnist/}, 1998.
\newblock URL \url{https://ci.nii.ac.jp/naid/10027939599/en/}.

\bibitem[LeCun et~al.(1989)LeCun, Boser, Denker, Henderson, Howard, Hubbard,
  and Jackel]{lecun1989backpropagation}
Yann LeCun, Bernhard Boser, John~S Denker, Donnie Henderson, Richard~E Howard,
  Wayne Hubbard, and Lawrence~D Jackel.
\newblock Backpropagation applied to handwritten zip code recognition.
\newblock \emph{Neural computation}, 1\penalty0 (4):\penalty0 541--551, 1989.

\bibitem[Mantelero(2013)]{mantelero2013eu}
Alessandro Mantelero.
\newblock The eu proposal for a general data protection regulation and the
  roots of the ‘right to be forgotten’.
\newblock \emph{Computer Law \& Security Review}, 29\penalty0 (3):\penalty0
  229--235, 2013.

\bibitem[Mitchell et~al.()Mitchell, Potash, Barocas, D'Amour, and
  Lum]{mitchell_algorithmic_2021}
Shira Mitchell, Eric Potash, Solon Barocas, Alexander D'Amour, and Kristian
  Lum.
\newblock Algorithmic fairness: Choices, assumptions, and definitions.
\newblock 8\penalty0 (1):\penalty0 141--163.
\newblock ISSN 2326-8298.
\newblock \doi{10.1146/annurev-statistics-042720-125902}.
\newblock URL
  \url{https://www.annualreviews.org/doi/10.1146/annurev-statistics-042720-125902}.
\newblock Publisher: Annual Reviews.

\bibitem[Sekhari et~al.(2021)Sekhari, Acharya, Kamath, and
  Suresh]{sekhari2021remember}
Ayush Sekhari, Jayadev Acharya, Gautam Kamath, and Ananda~Theertha Suresh.
\newblock Remember what you want to forget: Algorithms for machine unlearning.
\newblock \emph{arXiv preprint arXiv:2103.03279}, 2021.

\bibitem[Shokri et~al.(2017)Shokri, Stronati, Song, and
  Shmatikov]{shokri2017membership}
Reza Shokri, Marco Stronati, Congzheng Song, and Vitaly Shmatikov.
\newblock Membership inference attacks against machine learning models.
\newblock In \emph{2017 IEEE Symposium on Security and Privacy (SP)}, pages
  3--18. IEEE, 2017.

\bibitem[Shumailov et~al.(2021)Shumailov, Shumaylov, Kazhdan, Zhao, Papernot,
  Erdogdu, and Anderson]{shumailov2021manipulating}
Ilia Shumailov, Zakhar Shumaylov, Dmitry Kazhdan, Yiren Zhao, Nicolas Papernot,
  Murat~A. Erdogdu, and Ross Anderson.
\newblock Manipulating sgd with data ordering attacks.
\newblock \emph{Proceedings of the 35th Conference on Neural Information
  Processing Systems}, 2021.

\bibitem[Thudi et~al.(2021)Thudi, Deza, Chandrasekaran, and
  Papernot]{thudi2021unrolling}
Anvith Thudi, Gabriel Deza, Varun Chandrasekaran, and Nicolas Papernot.
\newblock Unrolling sgd: Understanding factors influencing machine unlearning.
\newblock \emph{arXiv preprint arXiv:2109.13398}, 2021.

\bibitem[Ullah et~al.(2021)Ullah, Mai, Rao, Rossi, and Arora]{ullah2021machine}
Enayat Ullah, Tung Mai, Anup Rao, Ryan Rossi, and Raman Arora.
\newblock Machine unlearning via algorithmic stability.
\newblock \emph{arXiv preprint arXiv:2102.13179}, 2021.

\bibitem[Wu et~al.(2020)Wu, Dobriban, and Davidson]{wu2020deltagrad}
Yinjun Wu, Edgar Dobriban, and Susan Davidson.
\newblock Deltagrad: Rapid retraining of machine learning models.
\newblock In \emph{International Conference on Machine Learning}, pages
  10355--10366. PMLR, 2020.

\end{thebibliography}

\appendix
\section{Table of Notations}
\label{sec:notations}

\begin{table}[h!]
    \centering
    \begin{tabular}{ll}
        \toprule
        \textbf{Symbol} &  \textbf{Explanation}\\
        \midrule
            $D$, $D'$, $\Tilde{D}$ & datasets \\
            $\mathbf{x}_i$ & $i^{th}$ input in a dataset\\
            $y_i$ & label of $x_i$\\
            $\mathfrak{X}$ & data space \\
            $c$ & number of classes \\
            $\mathbf{w}$ & model parameters\\
            $W$ & parameter space\\
            $N$ & dimensionality of the parameter space\\
            $\mathbf{w_0}$ & model parameters after initialization\\
            $\mathbf{w_t}$ & model parameters after $t$ updates\\
            $M_\mathbf{w}$ & supervised machine learning model \\
            & with parameters $w$\\
            $\mathcal{L}$ & loss function\\
            $g$ & update rule of model, \eg SGD\\
            $\mathbf{\hat{x}}$ & minibatch of inputs\\
            $b$ & minibatch size\\
            $\eta$ & learning rate\\
            PoL & proof-of-learning\\
            PoUL & proof-of-unlearning\\
            $d$ & distance metric in the parameter space,\\
            & \eg $\ell_p$ or cosine distance\\
            $\epsilon$ & threshold to determine if a PoL is valid\\
            $H_{D g,d,\epsilon}$ or $H_{D}$ & set of all valid $(g,d,\epsilon)$ logs stemming\\
            & from $\mathbf{w}_0$ using dataset $D$\\
            $H_D(w)$ & set of logs with $\epsilon=0$\\ 
            $B$ & forging map (in the space of set of logs)\\
            $B_\epsilon$ & forging map where output $H$ has \\
            & error of $\epsilon$\\
            $\mathbf{x}^*$ & data points required to be unlearned\\
            $D / \mathbf{x}^*$ & dataset $\mathbf{x}^*$ removed\\
            $L$ & lipschitz constant \\
            $Ball_{\epsilon}(\mathbf{w})$ & distance ball in the parameter space,\\
            &  centering at $\mathbf{w}$ with radius $\epsilon$\\
            $\mathfrak{D}$ & distribution of datasets\\
            $\mu$ & mean of random variable $g(\mathbf{w},\mathbf{x})$\\
            $\sigma^2$ & trace of covariance matrix of $g(\mathbf{w},\mathbf{x})$\\
            $\sigma_i^2$ & variance of the ith component of $g(\mathbf{w},\mathbf{x})$\\
            $\mathbb{P}$ & probability\\
            $n$ & number of minibatches in a dataset\\
            $\alpha$ & number of PoL logs in $H_D$\\
            $m$ & length of the longest PoL log in $H_D$\\
            $\beta_D$ & the minibatch size needed for dataset \\
            & $D$ to be forged\\
        \bottomrule
    \end{tabular}
    \caption{Notations}
    \label{tab:notations}
\end{table}

\end{document}